\documentclass[pdflatex,sn-mathphys-num]{sn-jnl}

\usepackage{graphicx}%
\usepackage{multirow}%
\usepackage{amsmath,amssymb,amsthm,mathtools}
\usepackage{mathrsfs}
\usepackage[title]{appendix}%
\usepackage{xcolor}%
\usepackage{textcomp}%
\usepackage{manyfoot}%
\usepackage{booktabs}%
\usepackage{algorithm}%
\usepackage{algorithmicx}%
\usepackage{algpseudocode}%
\usepackage{listings}%
\usepackage{enumitem}

\usepackage[nameinlink,capitalize]{cleveref}

\newcommand{\cU}{\mathcal{U}}
\newcommand{\cV}{\mathcal{V}}
\newcommand{\cX}{\mathcal{X}}

\newcommand{\cW}{\mathcal{W}}

\newcommand{\cD}{\mathcal{D}}

\theoremstyle{thmstyleone}%
\newtheorem{theorem}{Theorem}%
\newtheorem{lemma}[theorem]{Lemma}

\theoremstyle{definition}
\newtheorem{definition}[theorem]{Definition}

\theoremstyle{remark}
\newtheorem{remark}[theorem]{Remark}
\graphicspath{{Figs/}}

\newcommand{\SL}[1]{\textcolor{black}{#1}}

\begin{document}

\title[Landscapes of Low-Tubal-Rank Tensor Sensing]{Strict-Saddle Landscapes and Multi-Rank Geometry in Low-Tubal-Rank Tensor Sensing}

\author[1]{\fnm{Eugene} \sur{Agyei-Kodie}  }\email{agyeikod@msu.edu}\equalcont{These authors contributed equally to this work.}

\author*[2,1]{\fnm{Longxiu} \sur{Huang}}\email{huangl3@msu.edu}
\equalcont{These authors contributed equally to this work.}

\author[3]{\fnm{Shuang} \sur{Li}}\email{lishuang@iastate.edu}
\equalcont{These authors contributed equally to this work.}

\author[3]{\fnm{Xiao} \sur{Liang}}\email{liangx@iastate.edu}
\equalcont{These authors contributed equally to this work.}

\affil[1]{\orgdiv{Department of Mathematics}, \orgname{Michigan State University},   \orgaddress{\street{619 Red Cedar}, \city{ East Lansing}, \postcode{48824}, \state{MI}, \country{USA}}}

\affil[2]{\orgdiv{Department of Computational Mathematics, Science and Engineering}, \orgname{Michigan State University},   \orgaddress{\street{428 S Shaw Ln}, \city{ East Lansing}, \postcode{48824}, \state{MI}, \country{USA}}}

\affil[3]{\orgdiv{Department of Electrical and Computer Engineering}, \orgname{Iowa State University}, \orgaddress{\street{2520 Osborn Dr}, \city{Ames}, \postcode{50011}, \state{IA}, \country{USA}}}


\abstract{
We study the optimization landscape of low-tubal-rank tensor sensing through a balanced factorization. Under a tubal restricted isometry condition, we establish a quantitative strict-saddle landscape with no spurious local minima for arbitrary Fourier multi-rank profiles. We further show that the local geometry depends on the Fourier-slice ranks rather than the tubal rank alone. Uniform ranks yield quadratic growth transverse to the solution orbit, whereas nonuniform ranks produce quartically flat directions through hidden frequency-wise overparameterization, even when the factor width equals the exact tubal rank. Numerical experiments illustrate the global optimization behavior and the contrasting local geometries.
}

\keywords{Low-tubal-rank tensor sensing, Nonconvex tensor factorization, Strict saddle landscape, Multi-rank geometry,   Tensor restricted isometry property}



\maketitle
 
\section{Introduction}
\label{sec:introduction}

Low-rank models provide an effective way to exploit hidden structure in
high-dimensional data and have become fundamental tools in signal processing,
imaging, machine learning, and scientific computing.  While matrix models are
well suited to two-dimensional data, many applications naturally produce
multidimensional arrays, including color images, hyperspectral
images, seismic data, and biomedical imaging data
\cite{kolda2009tensor,liu2013tensor,fan2017hyperspectral,kreimer2012tensor}.
Tensor representations preserve this multidimensional structure and have
motivated a broad range of recovery, completion, and approximation methods.

Unlike matrices, tensors admit several inequivalent notions of rank.  In this
paper, we consider the tubal rank associated with the t-product
\cite{kilmer2011factorization,kilmer2013thirdorder}.  The t-product provides a
matrix-like algebra for third-order tensors: after applying the discrete
Fourier transform along the third mode, tensor multiplication becomes ordinary
matrix multiplication between corresponding frontal slices.  This leads to
the tensor singular value decomposition (t-SVD) and makes low-tubal-rank
factorization particularly attractive for large-scale recovery problems.

Let
\(\mathcal X_\star\in\mathbb R^{n_1\times n_2\times n_3}\) be a tensor of
tubal rank \(r\), and suppose that we observe
\(y=\mathcal M(\mathcal X_\star)\), where
\(\mathcal M:\mathbb R^{n_1\times n_2\times n_3}\to\mathbb R^m\)
is a linear sensing operator.  A natural nonconvex approach is to factorize
the unknown tensor as \(\mathcal X=\mathcal U*\mathcal V^\ast\),\footnote{Throughout, \( * \) denotes
the t-product when used as a binary operator, a superscript \(^{*}\) denotes the t-adjoint.}  with
\(\mathcal U\in\mathbb R^{n_1\times r\times n_3}\) and
\(\mathcal V\in\mathbb R^{n_2\times r\times n_3}\), and optimize directly over
the factors.  We study the balanced objective
\begin{equation*}
f(\mathcal U,\mathcal V)
:=
\frac12
\left\|
\mathcal M
\bigl(
\mathcal U*\mathcal V^\ast-\mathcal X_\star
\bigr)
\right\|_2^2
+
\frac18
\left\|
\mathcal U^\ast*\mathcal U-\mathcal V^\ast*\mathcal V
\right\|_F^2 .
\end{equation*}
The balancing term controls the noncompact scaling ambiguity between the two
factors while preserving their common t-orthogonal symmetry.  If
\(\mathcal X_\star=\mathcal U_\star*\mathcal V_\star^\ast\) is a balanced
factorization of the ground truth, then the corresponding solution orbit is
\(\mathcal S_\star
=\{(\mathcal U_\star*\mathcal Q,\mathcal V_\star*\mathcal Q):
\mathcal Q\in\mathsf O_{\mathrm t}(r)\}\), where \(\mathsf O_{\mathrm t}(r)\) denotes the group of \(r\times r\times n_3\) real t-orthogonal tensors.

For low-rank matrix sensing, factorized formulations are known to have a
favorable optimization landscape under suitable restricted isometry conditions: local minima are globally optimal, while nonglobal critical points
possess directions of negative curvature
\cite{bhojanapalli2016global,ge2017no}.  Procrustes-type estimates further
relate the distance between matrix factors to the error in the corresponding
lifted low-rank matrices and play an important role in both landscape and
convergence analyses \cite{tu2016low}.

At first sight, one might expect a similar picture for low-tubal-rank tensor
sensing.  Indeed, after applying the discrete Fourier transform along the
third mode, the t-product becomes matrix multiplication between corresponding
Fourier slices.  A general sensing operator \(\mathcal M\), however, need not
act independently on these slices, so the resulting objective does not in
general decompose into a collection of independent matrix-sensing problems.
It is therefore not immediate that the benign landscape known in the matrix
setting carries over to the tensor problem.

\vspace{0.1in}
\noindent{\bf A motivating example.}
Figure~\ref{fig:asymmetric-tubal-sensing} gives a simple numerical indication
of the global optimization behavior.  For a fixed low-tubal-rank target and a
fixed Gaussian sensing operator, we minimize the balanced objective from
100 independent random initializations.  Every run reaches essentially exact
recovery, with final relative errors around machine precision and objective
values close to zero.  The result suggests that the factorized tensor-sensing objective
may have a benign geometry beyond a neighborhood of a specially constructed
initialization.

\begin{figure*}[htb!]
    \centering
    \includegraphics[width=0.94\textwidth]
    {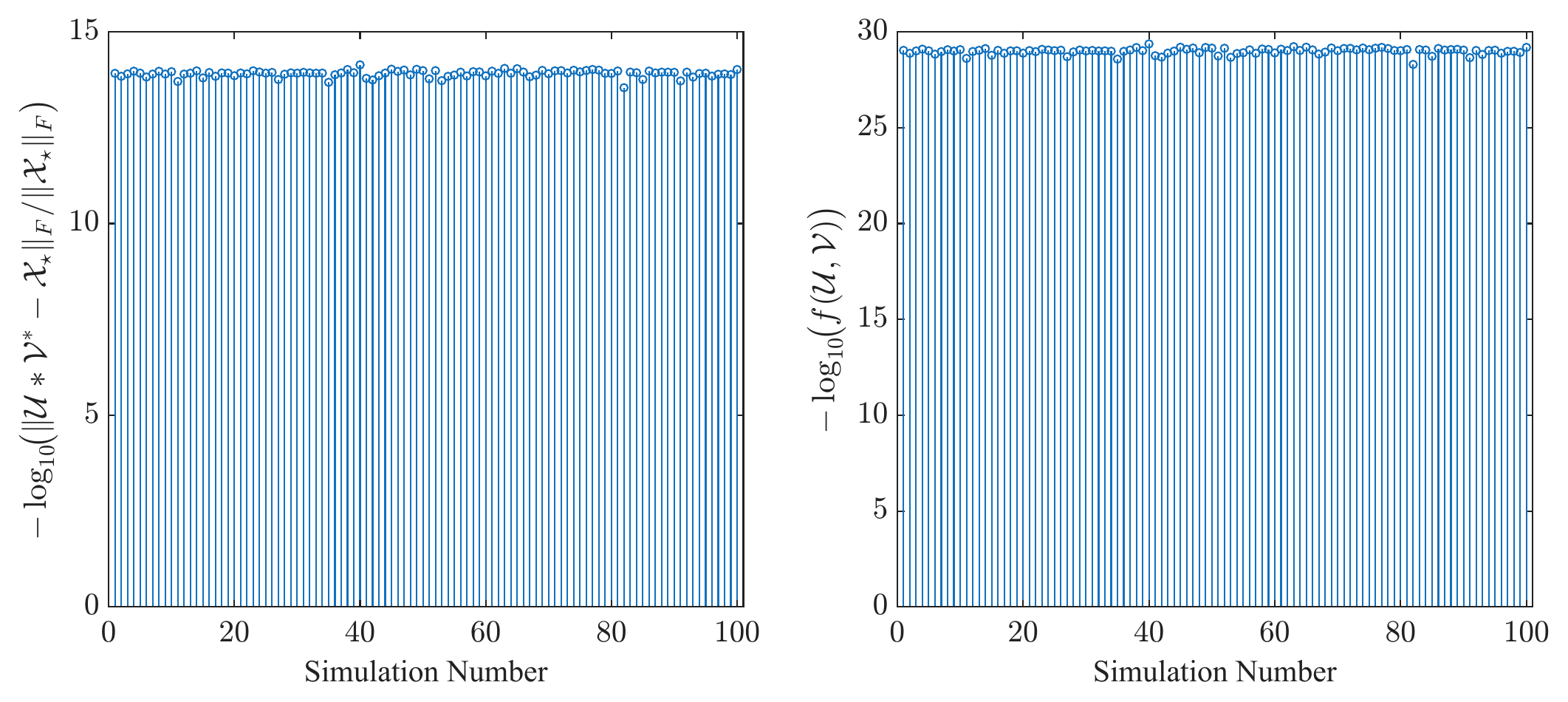}
    \caption{
    Consistent near-exact recovery across 100 independent random
    initializations.  The target is generated from two independent standard
    Gaussian factors of size \(6\times1\times3\) and normalized to unit
    Frobenius norm.  We use \(m=70\) noiseless Gaussian measurements with
    sensing entries distributed as \(\mathcal N(0,1/m)\).  The target and
    sensing operator are fixed across all runs, while the initial factors are
    drawn independently from \(\mathcal N(0,0.1^2)\).  Each run performs
    5000 gradient-descent iterations with stepsize \(0.05\).  The left panel
    reports \(-\log_{10}\) of the final relative reconstruction error, and the
    right panel reports \(-\log_{10}\) of the final balanced objective value.
    }
    \label{fig:asymmetric-tubal-sensing}
\end{figure*}

The behavior in Figure~\ref{fig:asymmetric-tubal-sensing} motivates the first
question studied in this paper: {\em under what conditions does the favorable
global landscape of low-rank matrix sensing persist for low-tubal-rank tensor
sensing?}

The tensor setting contains, however, an additional structural feature that is
not visible in this experiment.  Let \(\widehat{\mathcal X}_\star\) denote the discrete Fourier transform of \(\mathcal X_\star\) along the third mode, and \(\widehat{\mathcal X}_\star^{(k)}\) denote its \(k\)-th frontal slice.  For each Fourier frequency, let
\(\rho_k=\operatorname{rank}(\widehat{\mathcal X}_\star^{(k)})\).  The vector
\(\boldsymbol\rho_\star=(\rho_1,\ldots,\rho_{n_3})\) is the multi-rank of the
ground-truth tensor, whereas its tubal rank is only
\(r=\max_k\rho_k\).  Thus the tubal rank records the largest Fourier-slice rank
but does not determine the complete Fourier rank profile. 
This distinction has an important consequence for factorized optimization. Choosing the factor width equal to the exact tubal rank \(r\) does not imply
that the factorization is exactly parameterized at every Fourier frequency.
Whenever \(\rho_k<r\), the width-\(r\) factors retain inactive coordinates at
that frequency.  Hence overparameterization may arise even though the chosen
factor width matches the true tubal rank.

This leads to a second question that has no direct analogue in exactly
parameterized matrix sensing: {\em if rank deficiency at some Fourier frequencies
does not destroy the favorable global landscape, does it nevertheless change
the local geometry near the solution set?}

Our results show that the global and local aspects of the problem behave
differently.  Under a suitable tensor restricted isometry condition, the
no-spurious-minima property and the strict-saddle structure persist for
arbitrary multi-rank profiles; in particular, uniform Fourier ranks are not
required.  The local geometry, however, is governed by the full multi-rank
profile.  When every Fourier slice has rank equal to the tubal rank, the
factorization exhibits the nondegenerate quadratic behavior familiar from
exactly parameterized matrix models.  When some Fourier slice has smaller
rank, inactive factor coordinates generate additional flat directions beyond
those associated with the t-orthogonal symmetry.

This separation clarifies the roles played by tubal rank and multi-rank.  The
tubal rank determines the factor width, while the multi-rank determines
whether that factorization is locally nondegenerate.  Consequently, tensors
with the same tubal rank can exhibit qualitatively different local
optimization landscapes because of differences in their Fourier rank
profiles.

\subsection{Related work}
\label{subsec:related-work}

\noindent{}{\bf Low-tubal-rank tensor recovery.} 
The t-product and t-SVD framework
\cite{kilmer2011factorization,kilmer2013thirdorder} has been widely used in
low-rank tensor recovery.  Convex formulations based on tensor nuclear norms
have been developed for tensor completion, robust tensor recovery, and related
inverse problems \cite{zhang2017exact,lu2020tensor,su2024guaranteed}.  For
general linear measurements, the work \cite{zhang2021tensor} establishes
t-RIP for standard random sensing ensembles, providing a tensor analogue of the restricted isometry framework used in low-rank matrix sensing.

\vspace{0.1in}
\noindent{\bf Factorized low-tubal-rank recovery.}
Factorized formulations reduce the computational cost of low-tubal-rank recovery by optimizing over smaller tensor factors and avoiding repeated full t-SVD computations.  The work \cite{assoweh2022low} studies a
regularized Burer--Monteiro formulation for tensor completion under random tube-wise sampling, where selecting an index pair \((i,j)\) reveals the entire
mode-three tube \(\mathcal X(i,j,:)\).  Since this sampling operator commutes with the Fourier transform along the third mode, the data-fitting term separates across Fourier frequencies.  Together with the frequency-wise regularizers, this structure permits a slice-wise matrix-completion analysis with a common sampling mask.  The paper presents no-spurious-local-minima and quantitative strict-saddle results under suitable incoherence and sampling conditions.  Its quantitative strict-saddle argument relies on a positive smallest-singular-value coercivity bound across all frequencies.  Such a bound implicitly requires the width-\(r\) ground-truth factors to have full column rank at every Fourier frequency, or equivalently, \(\rho_k=r\) for all \(k\).

Recent work also investigates the convergence and statistical accuracy of factorized methods under linear measurements.  Factorized gradient descent in \cite{liu2024lowtubal} is analyzed in both noiseless and noisy settings, including cases in which the factor width overestimates the true tubal rank.
Implicit regularization is studied in \cite{karnik2025implicit}, which shows that gradient descent with small random initialization in an overparameterized tubal factorization can favor low-tubal-rank solutions.  For noisy recovery with a symmetric factorization, \cite{liu2026smallinit} shows that small initialization, combined with an appropriate stopping rule, yields recovery
error bounds governed by the true tubal rank rather than the overestimated factor width.  \SL{Liu et al.~\cite{liu2025apgd} also identify frequency-wise overparameterization even when the factor width equals the true tubal rank, and use the multi-rank profile to establish linear convergence guarantees for alternating preconditioned gradient descent.}  These algorithmic results describe the behavior of iterates under specified initialization and measurement conditions.

The present work complements these contributions by studying the balanced asymmetric sensing objective over the entire factor space under t-RIP, without requiring the sensing loss to separate across Fourier frequencies. Our analysis also distinguishes the roles of tubal rank and multi-rank in the local landscape.  Even when the factor width equals the exact tubal rank \(r\), a Fourier slice with rank \(\rho_k<r\) remains overparameterized. We show that this frequency-wise rank deficiency creates normal Hessian-kernel directions beyond the t-orthogonal symmetry, with quartic objective growth and cubic gradient growth along suitable perturbations. By contrast, when \(\rho_k=r\) at every frequency, the Hessian kernel consists exactly of the symmetry directions and the objective has quadratic transverse growth.  This distinction explains how a benign global landscape can coexist with local degeneracy and why the quantitative strict-saddle geometry depends on the Fourier rank profile.

\vspace{0.1in}
\noindent{\bf Low-rank matrix landscapes and overparameterization.} For low-rank matrix sensing, the works \cite{bhojanapalli2016global,ge2017no} show that suitable RIP conditions lead to a benign factorized landscape with no spurious local minima and strict saddles away from the global solution set.  The Procrustes geometry developed in \cite{tu2016low} provides quantitative relations between factor-space error and lifted matrix error and plays a central role in related landscape and convergence analyses.

Overparameterized matrix factorizations can exhibit additional degeneracy and slower local behavior; see, for example,
\cite{zhuo2024overparameterized}.  \SL{For overparameterized positive-semidefinite matrix factorization, Davis et al.~\cite{davis2025fourthorder} characterize quartic objective growth on a suitable manifold and give a cubic upper bound on the full gradient norm on that manifold.}  The tubal setting introduces a distinct form of overparameterization: it can occur even when the chosen factor width equals the exact tubal rank.  Different Fourier frequencies may therefore be exactly parameterized and overparameterized simultaneously.  This frequency-dependent structure is what separates the two local geometric regimes identified in this paper.

\subsection{Main contributions}
\label{subsec:introduction-contributions}

Our main contributions are summarized as follows.

\begin{enumerate}[label=(\roman*),leftmargin=2em]

\item
Under a tubal restricted isometry condition of order \(2r\) with distortion below \(1/5\), we prove that every local minimum of the balanced objective is globally optimal and every nonglobal critical point is a strict saddle. The global minimizers are exactly the balanced ground-truth factorizations up to a common t-orthogonal transformation. We also establish a quantitative strict-saddle description over the entire factor space. These results hold for arbitrary Fourier multi-rank profiles.

\item
We establish a multi-rank dichotomy for the local geometry. When \(\rho_k=r\) at every frequency, the Hessian kernel at each global minimizer consists exactly of the symmetry-generated tangent directions, and the objective has quadratic growth transverse to the solution orbit. If some \(\rho_k<r\), frequency-wise overparameterization occurs even though the factor width equals the exact tubal rank. At every global minimizer, we construct additional Hessian-kernel directions normal to the solution orbit along which the objective grows quartically and the gradient norm grows cubically. Consequently, local quadratic growth and a local Polyak--\L{}ojasiewicz inequality fail.

\item For fixed problem data and gradient tolerance \(\epsilon>0\), we obtain strict-saddle proximity radii \(O(\epsilon)\) for uniform multi-rank profiles and \(O(\epsilon^{1/3})\) for arbitrary profiles. The latter estimate includes an explicit \(\epsilon\)-dependent negative-curvature threshold. In the nonuniform case, the quartically flat directions rule out any proximity radius \(o(\epsilon^{1/3})\) when the negative-curvature threshold is fixed and positive, explaining the obstruction to matrix-like linear proximity estimates.

\item
Numerical experiments illustrate recovery from independent random initializations and the transition from quadratic to quartic local growth for tensors with the same tubal rank but different multi-rank profiles. Experiments with targets derived from optical coherence tomography data further examine recovery and the effects of model mismatch.

\end{enumerate}
\subsection{Organization}
\label{subsec:introduction-organization}

The remainder of the paper is organized as follows.
Section~\ref{sec:preliminaries} introduces the t-product notation, the
low-tubal-rank sensing model, the tensor restricted isometry property, and the
product-space geometry used in the analysis.
Section~\ref{sec:main-results} first establishes the global strict-saddle
landscape and then develops the multi-rank-dependent local geometry.
Section~\ref{sec:numerics} presents numerical experiments illustrating the
global landscape, the multi-rank-dependent local behavior, and the OCT-based
sensing experiments.
Section~\ref{sec:conclusion} concludes the paper and discusses possible
extensions.


\section{Preliminaries and Problem Setup}
\label{sec:preliminaries}
\label{sec:problem-setup}

We first introduce the notation and basic t-product conventions used
throughout the paper, and then formulate the balanced low-tubal-rank
tensor sensing problem.

\subsection{Notation and the t-product}
\label{subsec:basic-notation}
\label{subsec:fourier-t-product}

For a positive integer \(n\), let
\(  [n]  := \{1,\ldots,n\}. \)
We use calligraphic letters for tensors and ordinary capital letters for matrices. The vector \(e_j\) denotes the \(j\)-th standard basis vector. For a complex matrix \(A\), \(A^H\) denotes its conjugate
transpose, \(\sigma_j(A)\) denotes its \(j\)-th largest singular value,
and \(A\succeq0\) means that \(A\) is Hermitian positive semidefinite.

For complex matrices of the same size, we use the real Frobenius inner product
\( \langle A,B\rangle_F  :=  \operatorname{Re}\operatorname{tr}(A^HB).
\)
For real tensors, \(\langle\cdot,\cdot\rangle_F\) and \(\|\cdot\|_F\) denote the usual Frobenius inner product and norm.

For a real tensor \(\mathcal A \in  \mathbb R^{n_1\times n_2\times n_3}, \) let \(\widehat{\mathcal A}\) denote its discrete Fourier transform along the third mode, and write \(  \widehat{\mathcal A}^{(k)}  \in  \mathbb C^{n_1\times n_2}\) for its \(k\)-th frontal slice. We use the unnormalized discrete Fourier transform, so Parseval's identity takes the form
\begin{equation}
\label{eq:parseval-tensor}
    \langle\mathcal A,\mathcal B\rangle_F
    = \frac1{n_3}\sum_{k=1}^{n_3}  \operatorname{Re}\operatorname{tr} \left(     \widehat{\mathcal A}^{(k)H}     \widehat{\mathcal B}^{(k)} \right),
    \qquad
    \|\mathcal A\|_F^2 =  \frac1{n_3}   \sum_{k=1}^{n_3}   \left\|      \widehat{\mathcal A}^{(k)}\right\|_F^2.
\end{equation}

Because the tensors considered in this paper are real, their Fourier
slices satisfy the usual conjugate-symmetry condition. Whenever
Fourier-domain factors, alignments, or perturbations are constructed,
the slices at paired frequencies are chosen as complex conjugates,
while the choices at self-conjugate frequencies are real. This
ensures that their inverse Fourier transforms are real.

For conformable tensors \(\mathcal A\) and \(\mathcal B\), their
t-product is characterized by
\(  \widehat{\mathcal A*\mathcal B}^{(k)}  =   \widehat{\mathcal A}^{(k)} \widehat{\mathcal B}^{(k)},   k\in[n_3].
\)
The t-adjoint is characterized by \( \widehat{\mathcal A^\ast}^{(k)} = \widehat{\mathcal A}^{(k)H}\). The tubal rank of \(\mathcal A\) is
\[
    \operatorname{rank}_{\mathrm t}(\mathcal A) := \max_{k\in[n_3]} \operatorname{rank}  \bigl(   \widehat{\mathcal A}^{(k)}  \bigr).
\]

The t-identity tensor
\(  \mathcal I  \in  \mathbb R^{r\times r\times n_3}
\)
is characterized by
\(  \widehat{\mathcal I}^{(k)}  =  I_r,    k\in[n_3].\)
The t-orthogonal group is
\(  \mathsf O_{\mathrm t}(r)  :=
    \left\{  \mathcal Q
        \in \mathbb R^{r\times r\times n_3}: \mathcal Q^\ast*\mathcal Q =  \mathcal Q*\mathcal Q^\ast  =   \mathcal I
    \right\}.
\)
Equivalently,
\begin{equation*}
    \mathcal Q\in\mathsf O_{\mathrm t}(r)
    \quad\Longleftrightarrow\quad
    \widehat{\mathcal Q}^{(k)}
    \text{ is unitary for every }k\in[n_3].
\end{equation*}

We repeatedly use the standard identities
\(  (\mathcal A*\mathcal B)^\ast  = \mathcal B^\ast*\mathcal A^\ast\)
and \(    \langle\mathcal A*\mathcal B,  \mathcal C \rangle_F  =  \langle  \mathcal B,  \mathcal A^\ast*\mathcal C\rangle_F = \langle     \mathcal A,   \mathcal C*\mathcal B^\ast \rangle_F.
\)
If \(\mathcal Q\) is t-orthogonal, then
\begin{equation*}
    \|\mathcal A*\mathcal Q\|_F
    =
    \|\mathcal A\|_F,
    \qquad
    \|\mathcal Q*\mathcal B\|_F
    =
    \|\mathcal B\|_F.
\end{equation*}
All of these identities follow by applying the corresponding matrix
identities to the Fourier slices and then using
\eqref{eq:parseval-tensor}.

\subsection{Low-tubal-rank tensor sensing}
\label{subsec:problem-setup}

Let \(    \mathcal X_\star    \in    \mathbb R^{n_1\times n_2\times n_3}\)
be an unknown low-tubal-rank tensor. For each \(k\in[n_3]\), define \(  \rho_k := \operatorname{rank}  \bigl(      \widehat{\mathcal X}_\star^{(k)}   \bigr). \)
The vector
\[\bf{\rho}_\star := (\rho_1,\ldots,\rho_{n_3})
\]
is called the \emph{multi-rank} of
\(\mathcal X_\star\). 
\begin{equation*}
    r := \operatorname{rank}_{\mathrm t}(\mathcal X_\star) = \max_{k\in[n_3]}\rho_k
\end{equation*}
is called the \textit{tubal-rank} of $\mathcal{X}_\star$ and we assume throughout that \(\mathcal X_\star \neq 0\), so that \(1 \le r \le \min\{n_1,n_2\}\). The degenerate zero-tensor case is omitted. We use a factorization of width \(r\). Although this factor width equals the exact tubal rank, it is overparameterized at every frequency for which \(\rho_k<r\). Using SVDs of the Fourier slices, chosen
compatibly with the conjugate-symmetry condition, we fix a balanced
factorization
\begin{equation*}
    \mathcal X_\star
    =
    \mathcal U_\star*\mathcal V_\star^\ast,
    \qquad
    \mathcal U_\star^\ast*\mathcal U_\star
    =
    \mathcal V_\star^\ast*\mathcal V_\star,
\end{equation*}
where \(   \mathcal U_\star  \in  \mathbb R^{n_1\times r\times n_3},  \mathcal V_\star   \in   \mathbb R^{n_2\times r\times n_3}. \)

Let
\[
    \mathcal M:
    \mathbb R^{n_1\times n_2\times n_3}
    \longrightarrow
    \mathbb R^m
\]
be a linear sensing operator. We consider the noiseless measurements
\begin{equation*}
    y
    =
    \mathcal M(\mathcal X_\star).
\end{equation*}
The adjoint
\(
    \mathcal M^\ast:
    \mathbb R^m
    \longrightarrow
    \mathbb R^{n_1\times n_2\times n_3}
\)
is characterized by
\begin{equation*}
    \langle
        \mathcal M(\mathcal Z),
        z
    \rangle_{\mathbb R^m}
    =
    \langle
        \mathcal Z,
        \mathcal M^\ast(z)
    \rangle_F
\end{equation*}
for every tensor \(\mathcal Z\) and every \(z\in\mathbb R^m\).

For candidate factors
\(
    \mathcal U
    \in
    \mathbb R^{n_1\times r\times n_3}
    ~\text{and}~
    \mathcal V
    \in
    \mathbb R^{n_2\times r\times n_3},
\)
define the reconstruction and balancing residuals by
\begin{equation*}
    \mathcal E(\mathcal U,\mathcal V)
    :=
    \mathcal U*\mathcal V^\ast-\mathcal X_\star,
    \qquad
    \mathcal B(\mathcal U,\mathcal V)
    :=
    \mathcal U^\ast*\mathcal U
    -
    \mathcal V^\ast*\mathcal V.
\end{equation*}
We study the balanced factorized objective
\begin{align}
    f(\mathcal U,\mathcal V)
    &:=   \frac12
    \left\|  \mathcal M   \bigl(
            \mathcal E(\mathcal U,\mathcal V)
        \bigr)
    \right\|_2^2  +  \frac18
    \left\|   \mathcal B(\mathcal U,\mathcal V)
    \right\|_F^2.
\label{eq:balanced-sensing-objective}
\end{align}

The balancing term controls the noncompact ambiguity between the two
factors while preserving their common t-orthogonal symmetry. In
particular, for every
\(\mathcal Q\in\mathsf O_{\mathrm t}(r)\),
\begin{equation*}
    f
    \bigl(
        \mathcal U*\mathcal Q,
        \mathcal V*\mathcal Q
    \bigr)
    =
    f(\mathcal U,\mathcal V).
\end{equation*}
We therefore define the balanced ground-truth factor orbit by
\begin{equation*}
    \mathcal S_\star
    :=
    \left\{
        \bigl(
            \mathcal U_\star*\mathcal Q,\,
            \mathcal V_\star*\mathcal Q
        \bigr):
        \mathcal Q\in\mathsf O_{\mathrm t}(r)
    \right\}.
\end{equation*}
Theorem~\ref{thm:global-strict-saddle} will show that this orbit is
exactly the set of global minimizers of
\eqref{eq:balanced-sensing-objective}.

\begin{definition}[t-RIP]
\label{def:t-RIP}
Let \(s\geq1\) and \(\delta\in(0,1)\). We say that
\(\mathcal M\) satisfies the \((s,\delta)\)-t-RIP if
\begin{equation*}
    (1-\delta)\|\mathcal Z\|_F^2
    \leq
    \|\mathcal M(\mathcal Z)\|_2^2
    \leq
    (1+\delta)\|\mathcal Z\|_F^2
\end{equation*}
for every tensor \(\mathcal Z\) satisfying
\(  \operatorname{rank}_{\mathrm t}(\mathcal Z)  \leq s.\)
\end{definition}

\begin{remark} 
\label{rem:random-ensembles-TRIP}
The t-RIP assumption holds with high probability for standard random
sensing operators. Suppose that
\[
    [\mathcal M(\mathcal Z)]_\ell
    =
    \langle
        \mathcal A_\ell,
        \mathcal Z
    \rangle_F,
    \qquad
    \ell=1,\ldots,m,
\]
where the entries of the sensing tensors \(\mathcal A_\ell\) are
independent, centered, variance-\(1/m\) sub-Gaussian random
variables. It is shown in \cite{zhang2021tensor} that, for
\(s\geq1\), \(\delta\in(0,1)\), and \(\eta\in(0,1)\), the operator
\(\mathcal M\) satisfies the \((s,\delta)\)-t-RIP with probability at
least \(1-\eta\), provided
\[
    m
    \geq
    C\delta^{-2}
    \max
    \left\{
        s(n_1+n_2+1)n_3,\,
        \log(\eta^{-1})
    \right\},
\]
where \(C>0\) depends only on the sub-Gaussian parameter. This class
includes normalized Gaussian and symmetric Bernoulli sensing
ensembles. In particular, taking \(s=2r\) verifies the t-RIP
assumption used in Theorem~\ref{thm:global-strict-saddle}.
\end{remark}

\subsection{Product-space geometry}
\label{subsec:product-factor-space}

We identify a factor pair with the vertically stacked tensor
\begin{equation*}
    \mathcal W
    :=
    \begin{bmatrix}
        \mathcal U\\
        \mathcal V
    \end{bmatrix}
    \in
    \mathbb R^{(n_1+n_2)\times r\times n_3},
    \qquad
    \mathcal W_\star
    :=
    \begin{bmatrix}
        \mathcal U_\star\\
        \mathcal V_\star
    \end{bmatrix}.
\end{equation*}
Likewise, a direction in the product factor space is written as
\[
    \mathcal Z
    :=
    \begin{bmatrix}
        \mathcal Z_{\mathcal U}\\
        \mathcal Z_{\mathcal V}
    \end{bmatrix}.
\]
We use the product Frobenius inner product
\(   \langle      \mathcal W,  \mathcal Z \rangle_F :=  \langle     \mathcal U,     \mathcal Z_{\mathcal U} \rangle_F +  \langle     \mathcal V,     \mathcal Z_{\mathcal V} \rangle_F,\)
and hence
\(
    \|\mathcal W\|_F^2
    =
    \|\mathcal U\|_F^2
    +
    \|\mathcal V\|_F^2.
\)
We also write
\(f(\mathcal W) :=f(\mathcal U,\mathcal V), \mathcal E(\mathcal W) :=\mathcal E(\mathcal U,\mathcal V),    \mathcal B(\mathcal W) := \mathcal B(\mathcal U,\mathcal V).
\)

The associated stacked Gram tensor is
\begin{equation*}
    \mathcal W*\mathcal W^\ast
    =
    \begin{bmatrix}
        \mathcal U*\mathcal U^\ast
        &
        \mathcal U*\mathcal V^\ast
        \\
        \mathcal V*\mathcal U^\ast
        &
        \mathcal V*\mathcal V^\ast
    \end{bmatrix}.
\end{equation*}

The distance from \(\mathcal W\) to the ground-truth factor orbit is defined as the \emph{Procrustes distance}:
\begin{equation*}
\begin{aligned}
    \operatorname{dist}
    \bigl(
        \mathcal W,
        \mathcal S_\star
    \bigr)
    &:=
    \min_{\mathcal Q\in\mathsf O_{\mathrm t}(r)}
    \left\|       \mathcal W-\mathcal W_\star*\mathcal Q
    \right\|_F= \min_{\mathcal Q\in\mathsf O_{\mathrm t}(r)}
    \left(    \|\mathcal U-\mathcal U_\star*\mathcal Q\|_F^2+ \|\mathcal V-\mathcal V_\star*\mathcal Q\|_F^2
    \right)^{1/2}.
\end{aligned}
\end{equation*}

All differentiation is performed in the original real product factor
space. We use the conventions
\begin{equation*}
    Df(\mathcal W)[\mathcal Z]
    =
    \langle
        \nabla f(\mathcal W),
        \mathcal Z
    \rangle_F,
    \qquad
    D^2f(\mathcal W)[\mathcal Z,\mathcal Z]
    =
    \left\langle
        \mathcal Z,
        \nabla^2f(\mathcal W)\mathcal Z
    \right\rangle_F.
\end{equation*}
The smallest Hessian eigenvalue is
\begin{equation*}
    \lambda_{\min}
    \bigl(
        \nabla^2f(\mathcal W)
    \bigr)
    :=
    \min_{\|\mathcal Z\|_F=1}
    D^2f(\mathcal W)
    [\mathcal Z,\mathcal Z].
\end{equation*}
A critical point \(\mathcal W\) is called a strict saddle if
\[
    \lambda_{\min}
    \bigl(
        \nabla^2f(\mathcal W)
    \bigr)
    <0.
\]

\begin{definition} 
\label{def:quantitative-strict-saddle}
Let \(\epsilon,\gamma,\zeta>0\). We say that \(f\) is
\((\epsilon,\gamma,\zeta)\)-strict saddle relative to
\(\mathcal S_\star\) if every factor pair \(\mathcal W\) satisfies
at least one of
\[
    \|\nabla f(\mathcal W)\|_F
    \geq
    \epsilon,
    \qquad
    \lambda_{\min}
    \bigl(
        \nabla^2f(\mathcal W)
    \bigr)
    \leq
    -\gamma,
    \qquad
    \operatorname{dist}
    \bigl(
        \mathcal W,
        \mathcal S_\star
    \bigr)
    \leq
    \zeta.
\]
\end{definition}
\paragraph{Tangent and normal spaces.}
\label{subsec:tangent-normal-spaces}
For
\[
    \overline{\mathcal W}
    =
    \begin{bmatrix}
        \overline{\mathcal U}\\
        \overline{\mathcal V}
    \end{bmatrix}
    \in
    \mathcal S_\star,
\]
the tangent space
\(T_{\overline{\mathcal W}}\mathcal S_\star\) consists of the
derivatives at \(t=0\) of differentiable curves in
\(\mathcal S_\star\) passing through \(\overline{\mathcal W}\).
Since \(\mathcal S_\star\) is the orbit of the common
t-orthogonal action,
\begin{equation*}
    T_{\overline{\mathcal W}}\mathcal S_\star
    =
    \left\{
        \begin{bmatrix}
            \overline{\mathcal U}*\mathcal K\\
            \overline{\mathcal V}*\mathcal K
        \end{bmatrix}
        :
        \mathcal K
        \in
        \mathbb R^{r\times r\times n_3},
        \quad
        \mathcal K^\ast=-\mathcal K
    \right\}.
\end{equation*}
Indeed, the skew-adjoint condition follows by differentiating
\[
    \mathcal Q(t)^\ast*\mathcal Q(t)
    =
    \mathcal I,
\]
and the reverse inclusion follows by taking the frequency-wise matrix
exponential of a skew-adjoint tensor.

The normal space is the orthogonal complement
\begin{equation*}
    N_{\overline{\mathcal W}}\mathcal S_\star
    :=
    \left(
        T_{\overline{\mathcal W}}\mathcal S_\star
    \right)^\perp
\end{equation*}
with respect to the product Frobenius inner product.

 
\section{Main Results}
\label{sec:main-results}

\subsection{Global Strict-Saddle Landscape under t-RIP}
\label{subsec:global-strict-saddle}
We first establish a quantitative strict-saddle description over the entire factor space. The result holds for arbitrary multi-ranks, including the nonuniform case in which some Fourier slices have rank strictly smaller than the tubal rank \(r\). 

\begin{theorem} 
\label{thm:global-strict-saddle}
Let \(\mathcal X_\star\in\mathbb R^{n_1\times n_2\times n_3}\) be nonzero and have tubal rank \(r \ge 1\), and consider
\[   f(\cU,\cV)   =    \frac12   \left\|   \mathcal M    \bigl(  \cU*\cV^\ast-\mathcal X_\star   \bigr)  \right\|_2^2   +   \frac18   \left\|     \cU^\ast*\cU  -   \cV^\ast*\cV   \right\|_F^2.\]
Assume that \(\mathcal M\) satisfies the \((2r,\delta)\)-t-RIP with \(\delta<\frac1{5}.\) Then the following statements hold.

\begin{enumerate}[label=(\roman*),leftmargin=2em]

    \item The set of global minimizers is exactly
    \[     \mathcal S_\star     =    \left\{      \bigl(     \cU_\star*\mathcal Q,    \cV_\star*\mathcal Q  \bigr):   \mathcal Q\in\mathsf O_{\mathrm t}(r)    \right\}.  \]

    \item Let \(\mathcal{W}=(\cU,\cV)\), and set
    \( d(\cW)  :=    \operatorname{dist}(\cW,\mathcal S_\star).  \)
    If \(\cW\notin\mathcal S_\star\) is a critical point, then
    \begin{equation*}  
        \lambda_{\min}\bigl(\nabla^2 f(\cW)\bigr) \leq -\frac{1-5\delta}{8r}d(\cW)^2 < 0.   \end{equation*}
    Consequently, every nonglobal critical point is a strict saddle and every local minimum is globally optimal.

    \item  For any \(\epsilon,\gamma,\zeta>0\) satisfying
\(\gamma+\frac{4\epsilon}{\zeta}\leq\frac{1-5\delta}{8r}\,\zeta^2,\)
the objective \(f\) is
\((\epsilon,\gamma,\zeta)\)-strict saddle relative to
\(\mathcal S_\star\). That is, every factor pair
\(\mathcal W\) satisfies at least one of
\[
\|\nabla f(\mathcal W)\|_F\geq\epsilon,
\qquad
\lambda_{\min}\bigl(\nabla^2f(\mathcal W)\bigr)
\leq-\gamma,
\qquad
d(\mathcal W)\leq\zeta.
\]
In particular, for every \(\epsilon>0\), one may take
\[
\zeta=\left(\frac{64r\epsilon}{1-5\delta}\right)^{1/3},
\qquad
\gamma=\left(\frac{(1-5\delta)\epsilon^2}{r}\right)^{1/3}.
\]

\end{enumerate}
\end{theorem}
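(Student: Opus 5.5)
The plan is to adapt the matrix-sensing strict-saddle argument of \cite{ge2017no} to the stacked tubal formulation. Everything is carried out in the product factor space with the stacked tensor $\mathcal W$. The key device is a ``lifted'' objective in terms of the Gram tensor $\mathcal W*\mathcal W^\ast$.

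\emph{Algebraic reformulation.} Let $\mathcal D$ be the sign tensor $\operatorname{diag}(\mathcal I_{n_1},-\mathcal I_{n_2})$. Frequency-wise computation gives the identity
\[
\tfrac14\|\mathcal W^\ast*\mathcal D*\mathcal W\|_F^2=\tfrac14\|\mathcal W*\mathcal W^\ast\|_F^2-\|\mathcal U*\mathcal V^\ast\|_F^2 .
\]
Hence $f$ equals a sensing term in $\mathcal U*\mathcal V^\ast$ plus $\frac18\|\mathcal W*\mathcal W^\ast-\mathcal W_\star*\mathcal W_\star^\ast\|_F^2$-type terms, up to cross terms. These cross terms are controlled by the balancedness of $\mathcal W_\star$, which implies $\mathcal W_\star^\ast*\mathcal D*\mathcal W_\star=0$.

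\emph{Optimality condition, part (i).} I first note that $f\ge0$ and $f(\mathcal S_\star)=0$. Conversely, $f(\mathcal W)=0$ forces $\mathcal M(\mathcal E)=0$. The tensor $\mathcal E$ has tubal rank at most $2r$, so t-RIP gives $\mathcal E=0$. Together with $\mathcal B=0$, this means that at each frequency $\widehat{\mathcal U}^{(k)}\widehat{\mathcal V}^{(k)H}=\widehat{\mathcal X}_\star^{(k)}$ and $\widehat{\mathcal U}^{(k)H}\widehat{\mathcal U}^{(k)}=\widehat{\mathcal V}^{(k)H}\widehat{\mathcal V}^{(k)}$. The matrix fact that balanced factorizations of a fixed matrix differ by a unitary then applies slice by slice; this holds even when $\rho_k<r$, since the extra columns are zero up to rotation. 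Choosing the unitaries conjugate-symmetrically yields $\mathcal Q\in\mathsf O_{\mathrm t}(r)$.

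\emph{Negative-curvature direction, part (ii).} Let $\mathcal W$ be critical. Take $\mathcal Q$ attaining the Procrustes distance; its existence follows from compactness of $\mathsf O_{\mathrm t}(r)$. Set $\Delta=\mathcal W-\mathcal W_\star*\mathcal Q$. Optimality of $\mathcal Q$ makes $\mathcal W^\ast*\mathcal W_\star*\mathcal Q$ self-adjoint (frequency-wise Hermitian positive semidefinite). The core computation, following \cite{ge2017no}, is
\[
D^2f(\mathcal W)[\Delta,\Delta]-4\,Df(\mathcal W)[\Delta]\ \lesssim\ \text{(RIP terms)}\ \|\Delta*\Delta^\ast\|_F^2-\|\mathcal W*\mathcal W^\ast-\mathcal W_\star*\mathcal W_\star^\ast\|_F^2 .
\]
I would evaluate both sides at the lifted level. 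All lifted tensors involved have tubal rank at most $2r$, so RIP replaces $\|\mathcal M(\cdot)\|^2$ by $\|\cdot\|_F^2$ up to a factor $1\pm\delta$. The mixed inner products $\langle\mathcal M(A),\mathcal M(B)\rangle$ are handled by polarization, at a cost of $\delta\|A\|\|B\|$.

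Two frequency-wise matrix inequalities then finish the step. The first is $\|\Delta\Delta^H\|_F^2\le 2\|WW^H-W_\star W_\star^H\|_F^2$, valid under the Procrustes alignment. The second is $\|WW^H-W_\star W_\star^H\|_F^2\ge c\,\sigma_r^2\ldots$; since that form would reintroduce $\sigma_{\min}$, I avoid it. Instead I use $\sum_j\|\Delta e_j\|^4\ge\|\Delta\|_F^4/r$, so that the best column $\Delta*e_j$ gives curvature at most $-\frac{1-5\delta}{8r}d^2$. Summed over frequencies with Parseval, this yields $\sum_j D^2f[\Delta_j,\Delta_j]\le-\frac{1-5\delta}{8}\|\Delta\|_F^4/\ldots$. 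Picking the best $j$ and normalizing by $\|\Delta_j\|^2\le\|\Delta\|_F^2$ gives the bound in (ii). The main obstacle is making the constant come out as exactly $1-5\delta$, and ensuring that no smallest-singular-value quantity enters, because $\rho_k<r$ is allowed.

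\emph{Part (iii).} Repeat the computation without assuming criticality. The gradient term contributes at most $4|Df(\mathcal W)[\Delta_j]|/\|\Delta_j\|^2$, which I bound by $4\epsilon\|\Delta\|_F/\|\Delta\|_F^2=4\epsilon/d$ after the same column selection. This gives
\[
\lambda_{\min}\le-\tfrac{1-5\delta}{8r}d^2+\tfrac{4\epsilon}{d} .
\]
If $d>\zeta$ and $\|\nabla f\|<\epsilon$, the right side is below $-\frac{1-5\delta}{8r}\zeta^2+\frac{4\epsilon}{\zeta}\le-\gamma$ by the stated condition. Substituting the explicit $\zeta$ and $\gamma$ is a routine check: with these choices $\frac{4\epsilon}{\zeta}$ and $\gamma$ together equal $\frac{1-5\delta}{8r}\zeta^2$ up to the stated constants.
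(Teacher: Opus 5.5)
\paragraph{Verdict.} There is a genuine gap: the central curvature inequality with constant $1-5\delta$ is never established, and the switch to column directions is unjustified and breaks part~(iii).

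\paragraph{What is sound.} Your overall architecture matches the paper: Procrustes alignment, the factor-$2$ Gram inequality, a bound on $D^2f[\Delta,\Delta]-4Df(\cW)[\Delta]$, and a $1/r$ bound that avoids $\sigma_{\min}$. Your direct proof of (i) is a valid alternative to the paper's deduction of (i) from (ii). It uses t-RIP to force $\mathcal E=0$, then slice-wise uniqueness of balanced factorizations.

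\paragraph{The curvature step.} Polarization is the wrong tool. Let $\mathcal A=\cW_\star*\mathcal Q$ with blocks $\mathcal A_{\cU},\mathcal A_{\cV}$, let $\Delta=\cW-\mathcal A$, and set $C:=\Delta_{\cU}^\ast*\Delta_{\cU}-\Delta_{\cV}^\ast*\Delta_{\cV}$. Using $\mathcal A_{\cU}*\mathcal A_{\cV}^\ast=\mathcal X_\star$ and the balancedness of $\mathcal A$, one gets $\mathcal E(\cW+t\Delta)=(1+t)\mathcal E+(t+t^2)\Delta_{\cU}*\Delta_{\cV}^\ast$ and $\mathcal B(\cW+t\Delta)=(1+t)\mathcal B+(t+t^2)C$. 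This yields the exact identity
\[
D^2f(\cW)[\Delta,\Delta]-4Df(\cW)[\Delta]=\|\mathcal M(\Delta_{\cU}*\Delta_{\cV}^\ast)\|_2^2+\tfrac14\|C\|_F^2-3\|\mathcal M(\mathcal E)\|_2^2-\tfrac34\|\mathcal B\|_F^2 ,
\]
in which every mixed term cancels. t-RIP then applies directly to the two sensed tensors, which have tubal rank at most $r$ and $2r$. Paying $\delta\|A\|\|B\|$ for mixed terms would add further multiples of $\delta$, and you would not reach $1-5\delta$. You also need two block identities:
\begin{itemize}
\item $\|\Delta*\Delta^\ast\|_F^2=4\|\Delta_{\cU}*\Delta_{\cV}^\ast\|_F^2+\|C\|_F^2$, so the positive part is at most $\frac{1+\delta}{4}\|\Delta*\Delta^\ast\|_F^2$;
\item $4\|\mathcal E\|_F^2+\|\mathcal B\|_F^2=\|\cW*\cW^\ast-\mathcal A*\mathcal A^\ast\|_F^2+2\|\mathcal A_{\cU}^\ast*\cU-\mathcal A_{\cV}^\ast*\cV\|_F^2$, so the negative part is at least $\frac{3(1-\delta)}{4}\|\cW*\cW^\ast-\mathcal A*\mathcal A^\ast\|_F^2$.
\end{itemize}
Combined with $\|\Delta*\Delta^\ast\|_F^2\le 2\|\cW*\cW^\ast-\mathcal A*\mathcal A^\ast\|_F^2$, these give exactly $-\frac{1-5\delta}{4}$. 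You name this as the main obstacle but leave it open.

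\paragraph{The column directions.} The identity above holds only for the full $\Delta$. For the pieces $\Delta_j$ there is no analogue. Moreover, $\sum_jD^2f[\Delta_j,\Delta_j]$ differs from $D^2f[\Delta,\Delta]$ by the cross terms $\sum_{i\neq j}D^2f[\Delta_i,\Delta_j]$. These come from the off-diagonal column blocks in $\|\mathcal M(\mathcal Z_{\cU}*\cV^\ast+\cU*\mathcal Z_{\cV}^\ast)\|_2^2$ and in the balancing Hessian; also, the $\langle\mathcal B,\cdot\rangle$ term sees only the diagonal blocks of $C$. So no per-column curvature bound follows from what you wrote.

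\paragraph{Consequence for (iii).} Cauchy--Schwarz gives $4|Df(\cW)[\Delta_j]|/\|\Delta_j\|_F^2\le 4\epsilon/\|\Delta_j\|_F$. Since $\|\Delta_j\|_F\le d$, this is at least $4\epsilon/d$, not at most. Replacing $\|\Delta_j\|_F$ by $d$ in the denominator goes the wrong way.

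\paragraph{The fix.} Keep the full direction $\Delta$, and use your column inequality only as a norm bound. Slice-wise,
\[
\|\widehat\Delta^{(k)H}\widehat\Delta^{(k)}\|_F^2\ge\sum_j\|\widehat\Delta^{(k)}e_j\|_2^4\ge\tfrac1r\|\widehat\Delta^{(k)}\|_F^4 ,
\]
and Parseval plus Jensen give $\|\Delta*\Delta^\ast\|_F^2\ge\frac1r\|\Delta\|_F^4$. Then
\[
D^2f[\Delta,\Delta]\le-\tfrac{1-5\delta}{8r}d^4+4Df(\cW)[\Delta],\qquad |Df(\cW)[\Delta]|\le\|\nabla f(\cW)\|_F\,d .
\]
Dividing by $\|\Delta\|_F^2=d^2$ gives both (ii) and (iii).
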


The proof is based on a symmetric lifting of the two-factor problem. We first establish the Procrustes and Hessian estimates, then use negative curvature outside \(\mathcal S_\star\) to identify all local and global minimizers.

\subsubsection{Tubal Procrustes alignment}
\label{subsubsec:tubal-Procrustes}

Choose
\begin{equation}
\label{eq:stacked-Procrustes}
    \mathcal Q_{\mathrm{opt}}     \in  \arg\min_{\mathcal Q\in\mathsf O_{\mathrm t}(r)}  \left\|    \cW-\cW_\star*\mathcal Q   \right\|_F.
\end{equation}
For each \(k\in[n_3]\), let \(\widehat{\cW}_\star^{(k)H} \widehat{\cW}^{(k)} =  L_k\Sigma_kR_k^H \) be a full SVD. By the  orthogonal Procrustes theorem \cite{schonemann1966generalized}, the minimizer in \eqref{eq:stacked-Procrustes} can be chosen so that \( \widehat{\mathcal Q}_{\mathrm{opt}}^{(k)}   =  L_kR_k^H.\) The SVDs can be chosen conjugate-symmetrically, so the resulting \(\mathcal Q_{\mathrm{opt}}\) is real and belongs to \(\mathsf O_{\mathrm t}(r)\). Moreover,
\begin{align}
    \left(
        \widehat{\cW}_\star^{(k)}
        \widehat{\mathcal Q}_{\mathrm{opt}}^{(k)}
    \right)^H
    \widehat{\cW}^{(k)}
 =
    \widehat{\mathcal Q}_{\mathrm{opt}}^{(k)H}
    \widehat{\cW}_\star^{(k)H}
    \widehat{\cW}^{(k)}
    =
    R_k\Sigma_kR_k^H
    \succeq0.
\label{eq:aligned-cross-Gram-PSD}
\end{align} 
Define the aligned error by
\(\cD:= \cW-\cW_\star*\mathcal Q_{\mathrm{opt}}.\)
Then \(  \|\cD\|_F   =  \operatorname{dist}\bigl(   \cW,\mathcal S_\star   \bigr).\)

\begin{lemma} 
\label{lem:aligned-factor-Gram}
The aligned error satisfies
\begin{equation}
\label{eq:aligned-factor-Gram}
    \frac1r
    \|\cD\|_F^4
    \leq
    \|\cD*\cD^\ast\|_F^2
    \leq
    2
    \left\|
        \cW*\cW^\ast
        -
        \cW_\star*\cW_\star^\ast
    \right\|_F^2.
\end{equation}
Consequently,
\begin{equation}
\label{eq:Gram-error-distance-lower0}
    \left\|
        \cW*\cW^\ast
        -
        \cW_\star*\cW_\star^\ast
    \right\|_F^2
    \geq
    \frac1{2r}
    \operatorname{dist}
    \bigl(
        \cW,\mathcal S_\star
    \bigr)^4.
\end{equation}
\end{lemma}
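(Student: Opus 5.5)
The plan is to reduce both inequalities in \eqref{eq:aligned-factor-Gram} to statements about individual Fourier slices, using Parseval \eqref{eq:parseval-tensor}, and then sum over \(k\in[n_3]\). Write \(D_k:=\widehat{\cD}^{(k)}\), \(W_k:=\widehat{\cW}^{(k)}\) and \(X_k:=\widehat{\cW}_\star^{(k)}\widehat{\mathcal Q}_{\mathrm{opt}}^{(k)}\), so that \(D_k=W_k-X_k\in\mathbb C^{(n_1+n_2)\times r}\). Since \(\widehat{\mathcal Q}_{\mathrm{opt}}^{(k)}\) is unitary, \(\widehat{\cW_\star*\cW_\star^\ast}^{(k)}=X_kX_k^H\). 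By \eqref{eq:aligned-cross-Gram-PSD}, \(X_k^HW_k\succeq0\). Both norms in the lemma are slice sums with the same weight \(1/n_3\), so the weights cause no difficulty.

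\emph{First inequality.} Each \(D_k\) has only \(r\) columns, so \(D_k^HD_k\) is an \(r\times r\) positive semidefinite matrix. Cauchy--Schwarz on its eigenvalues gives
\[
\|D_kD_k^H\|_F^2=\|D_k^HD_k\|_F^2\geq \tfrac1r\bigl(\operatorname{tr}D_k^HD_k\bigr)^2=\tfrac1r\|D_k\|_F^4 .
\]
Set \(a_k=\|D_k\|_F^2\). Cauchy--Schwarz over \(k\) gives \(\bigl(\frac1{n_3}\sum_k a_k\bigr)^2\le\frac1{n_3}\sum_k a_k^2\). Combining the two,
\[
\|\cD*\cD^\ast\|_F^2=\frac1{n_3}\sum_k\|D_kD_k^H\|_F^2\ge\frac1r\Bigl(\frac1{n_3}\sum_k a_k\Bigr)^2=\frac1r\|\cD\|_F^4 .
\]

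\emph{Second inequality.} Here I would prove, slice by slice, the complex analogue of the Procrustes lemma of \cite{tu2016low}:
\[
\|D_kD_k^H\|_F^2\le 2\|W_kW_k^H-X_kX_k^H\|_F^2\qquad\text{whenever } X_k^HW_k\succeq0 .
\]
Averaging this over \(k\) then gives the claim. To prove the slice inequality, drop the index \(k\) and expand \(WW^H-XX^H=DD^H+DX^H+XD^H\). The hypothesis makes \(X^HW\) Hermitian, and \(X^HX\) is Hermitian, so \(X^HD=X^HW-X^HX\) is Hermitian as well. This gives \(\operatorname{Re}\operatorname{tr}(DX^HDX^H)=\|X^HD\|_F^2\ge0\). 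The expansion then becomes
\[
\|WW^H-XX^H\|_F^2=\|D^HD\|_F^2+2\|DX^H\|_F^2+2\|X^HD\|_F^2+4\langle D^HD,X^HD\rangle_F .
\]
The only term that can be negative is the cross term \(\langle D^HD,X^HD\rangle_F\). I would control it through the splitting \(X^HD=X^HW-X^HX\). The piece \(\langle D^HD,X^HW\rangle_F\) is nonnegative, since it is an inner product of two positive semidefinite matrices. The piece \(\langle D^HD,X^HX\rangle_F\) equals \(\|DX^H\|_F^2\).

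The main obstacle is balancing this splitting against the remaining positive terms so that at least \(\frac12\|D^HD\|_F^2\) survives. I would first try to convert \(\langle D^HD,X^HD\rangle_F\) into a completed square of the form \(\frac12\|D^HD+c\,X^HD\|_F^2\), with the leftover absorbed by \(2\|X^HD\|_F^2\), and fall back on the convex-combination splitting of \(X^HD\) described above if the constants do not close. The argument is identical to the real case, because every step uses only Hermitian or positive semidefinite structure.

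Finally, \eqref{eq:Gram-error-distance-lower0} follows by chaining the two inequalities and recalling that \(\|\cD\|_F=\operatorname{dist}(\cW,\mathcal S_\star)\).
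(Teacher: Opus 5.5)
Your route is the paper's: reduce to Fourier slices, prove the lower bound with the two Cauchy--Schwarz steps, and prove the upper bound through a complex slice-wise Procrustes inequality that uses $X^HW\succeq0$ (your $X$ is the paper's $A_k$). The lower bound is complete and matches the paper. For the upper bound, your expansion of $\|WW^H-XX^H\|_F^2$ and the facts $\langle D^HD,X^HX\rangle_F=\|DX^H\|_F^2$ and $\langle D^HD,X^HW\rangle_F\ge0$ are all correct. The gap is that the decisive step is left open, and neither device closes it when used alone. From your expansion,
\[
\|WW^H-XX^H\|_F^2-\tfrac12\|D^HD\|_F^2=\tfrac12\|D^HD\|_F^2+2\|DX^H\|_F^2+2\|X^HD\|_F^2+4\langle D^HD,X^HD\rangle_F .
\]
A square that absorbs the whole cross term forces $c=4$. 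It produces $8\|X^HD\|_F^2$, but only $2\|X^HD\|_F^2$ is available, so the leftover is $2\|DX^H\|_F^2-6\|X^HD\|_F^2$. This is indefinite: for $r=1$ and $D=X$ a unit vector it equals $-4$, while $X^HW=2\succeq0$. If you instead substitute $X^HD=X^HW-X^HX$ into the whole cross term, you get $\tfrac12\|D^HD\|_F^2-2\|DX^H\|_F^2+2\|X^HD\|_F^2+4\langle D^HD,X^HW\rangle_F$. That is again not a sum of manifestly nonnegative terms.

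The missing idea is to apply each device to exactly half of the cross term. Write
\[
4\langle D^HD,X^HD\rangle_F=2\langle D^HD,X^HD\rangle_F+2\langle D^HD,X^HW\rangle_F-2\|DX^H\|_F^2 .
\]
The $-2\|DX^H\|_F^2$ then cancels the $+2\|DX^H\|_F^2$ exactly. The remaining terms complete a square with $c=2$, which uses precisely the available $2\|X^HD\|_F^2$:
\[
2\|WW^H-XX^H\|_F^2-\|D^HD\|_F^2=\|D^HD+2X^HD\|_F^2+4\langle D^HD,X^HW\rangle_F\ge0 .
\]
Since $X^HD$ is Hermitian, $D^HD+2X^HD=W^HW-X^HX$, and $\langle D^HD,X^HW\rangle_F=\|D(X^HW)^{1/2}\|_F^2$. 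So this is exactly the identity the paper proves. With this step supplied, your argument coincides with the paper's.
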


\begin{proof}
Fix \(k\in[n_3]\), and set
\(  W_k  :=\widehat{\cW}^{(k)},     A_k:= \widehat{\cW}_\star^{(k)}   \widehat{\mathcal Q}_{\mathrm{opt}}^{(k)},   D_k  :=  W_k-A_k.
\)
By \eqref{eq:aligned-cross-Gram-PSD}, we have \(  A_k^HW_k\succeq0. \)
In particular, \(A_k^HW_k\) is Hermitian and hence
\(  W_k^HA_k=A_k^HW_k.\)

Since \(D_kD_k^H\) and \(D_k^HD_k\) have the same nonzero eigenvalues,
\begin{align}
    \|D_kD_k^H\|_F^2  =   \|D_k^HD_k\|_F^2   =    \left\|
        W_k^HW_k+A_k^HA_k-2A_k^HW_k  \right\|_F^2.
\label{eq:aligned-error-direct-expansion}
\end{align}
Also,
\begin{align}
    \|W_kW_k^H-A_kA_k^H\|_F^2
    &=
    \|W_k^HW_k\|_F^2
    +
    \|A_k^HA_k\|_F^2
    -
    2\|A_k^HW_k\|_F^2.
\label{eq:Gram-error-direct-expansion}
\end{align}
Expanding \eqref{eq:aligned-error-direct-expansion} and comparing it with \eqref{eq:Gram-error-direct-expansion} gives
\begin{align*}
     2\|W_kW_k^H-A_kA_k^H\|_F^2
    -
    \|D_kD_k^H\|_F^2
    &=
    \|W_k^HW_k-A_k^HA_k\|_F^2
    +
    4
    \left\langle
        D_k^HD_k,
        A_k^HW_k
    \right\rangle
    \notag\\
    &=
    \|W_k^HW_k-A_k^HA_k\|_F^2
    +
    4
    \left\|
        D_k
        (A_k^HW_k)^{1/2}
    \right\|_F^2
    \geq0.
\end{align*}
Therefore,
\begin{equation}
\label{eq:aligned-matrix-upper-bound}
    \|D_kD_k^H\|_F^2
    \leq
    2\|W_kW_k^H-A_kA_k^H\|_F^2.
\end{equation} 
Since \(\widehat{\mathcal Q}_{\mathrm{opt}}^{(k)}\) is unitary, \(  A_kA_k^H=\widehat{\cW}_\star^{(k)}  \widehat{\cW}_\star^{(k)H}.
\)
Hence \eqref{eq:aligned-matrix-upper-bound} becomes
\[  \left\|   \widehat{\cD}^{(k)}   \widehat{\cD}^{(k)H}  \right\|_F^2  \leq  2    \left\|  \widehat{\cW}^{(k)}   \widehat{\cW}^{(k)H}
        - \widehat{\cW}_\star^{(k)}  \widehat{\cW}_\star^{(k)H} \right\|_F^2.
\]
Summing over \(k\) and using Parseval's identity gives
\begin{equation}
\label{eq:aligned-tubal-upper-bound}
    \|\cD*\cD^\ast\|_F^2
    \leq
    2
    \left\|
        \cW*\cW^\ast
        -
        \cW_\star*\cW_\star^\ast
    \right\|_F^2.
\end{equation} 
For the lower bound, \(\widehat{\cD}^{(k)}\) has at most \(r\) nonzero singular values. Thus
\begin{align*}
    \left\|
        \widehat{\cD}^{(k)}
        \widehat{\cD}^{(k)H}
    \right\|_F^2
    =
    \sum_{j=1}^{r}
    \sigma_j
    \bigl(
        \widehat{\cD}^{(k)}
    \bigr)^4
   \geq
    \frac1r
    \left(
        \sum_{j=1}^{r}
        \sigma_j
        \bigl(
            \widehat{\cD}^{(k)}
        \bigr)^2
    \right)^2 =
    \frac1r
    \left\|
        \widehat{\cD}^{(k)}
    \right\|_F^4.
\end{align*}
Using Parseval's identity and Jensen's inequality,
\begin{align}
    \|\cD*\cD^\ast\|_F^2=  \frac1{n_3}  \sum_{k=1}^{n_3}  \left\|    \widehat{\cD}^{(k)}    \widehat{\cD}^{(k)H}  \right\|_F^2\geq    \frac1{rn_3}  \sum_{k=1}^{n_3}  \left\|   \widehat{\cD}^{(k)}   \right\|_F^4\geq  \frac1r   \left(   \frac1{n_3}   \sum_{k=1}^{n_3}     \left\|    \widehat{\cD}^{(k)}    \right\|_F^2 \right)^2=   \frac1r \|\cD\|_F^4.
\label{eq:aligned-tubal-lower-bound}
\end{align}
Combining \eqref{eq:aligned-tubal-upper-bound} and \eqref{eq:aligned-tubal-lower-bound} proves \eqref{eq:aligned-factor-Gram}. Finally, \(\|\cD\|_F=\operatorname{dist}(\cW,\mathcal S_\star)\), so \eqref{eq:Gram-error-distance-lower0} follows immediately.
\end{proof}

\subsubsection{The aligned Hessian identity}


\begin{lemma}
\label{lem:aligned-Hessian}
Let
\( \mathcal A_{\cU} := \cU_\star*\mathcal Q_{\mathrm{opt}},    \mathcal A_{\cV}
    :=
    \cV_\star*\mathcal Q_{\mathrm{opt}},
\) 
and define the aligned factor errors
\[ \cD_{\cU}
    :=
    \cU-\mathcal A_{\cU},
    \qquad
    \cD_{\cV}
    :=
    \cV-\mathcal A_{\cV}.
\]
Then
\begin{align}
    & D^2f(\cU,\cV)
    \big[
        (\cD_{\cU},\cD_{\cV}),
        (\cD_{\cU},\cD_{\cV})
    \big]
    =\left\|
        \mathcal M
        \bigl(
            \cD_{\cU}* \cD_{\cV}^\ast
        \bigr)
    \right\|_2^2 +\frac14
    \left\| \cD_{\cU}^\ast* \cD_{\cU}  -\cD_{\cV}^\ast * \cD_{\cV}
    \right\|_F^2\notag\\
    &\qquad- 3 \left\|
        \mathcal M
        \bigl(
            \cU*\cV^\ast
            -
            \mathcal X_\star
        \bigr)
    \right\|_2^2
    -
    \frac34
    \left\|
        \cU^\ast*\cU
        -
        \cV^\ast*\cV
    \right\|_F^2
    +
    4Df(\cU,\cV)
    [
        \cD_{\cU},
        \cD_{\cV}
    ].
\label{eq:direct-aligned-Hessian}
\end{align}
In particular, at a critical point,
\begin{align}
    D^2f(\cU,\cV)
    \big[
        (\cD_{\cU},\cD_{\cV}),
        (\cD_{\cU},\cD_{\cV})
    \big]
    &=
    \left\|
        \mathcal M
        \bigl(
            \cD_{\cU}
            *
            \cD_{\cV}^\ast
        \bigr)
    \right\|_2^2
    +
    \frac14
    \left\|
        \cD_{\cU}^\ast
        *
        \cD_{\cU}
        -
        \cD_{\cV}^\ast
        *
        \cD_{\cV}
    \right\|_F^2
    \notag\\
    &\qquad
    -
    3
    \left\|
        \mathcal M
        \bigl(
            \cU*\cV^\ast
            -
            \mathcal X_\star
        \bigr)
    \right\|_2^2
    -
    \frac34
    \left\|
        \cU^\ast*\cU
        -
        \cV^\ast*\cV
    \right\|_F^2.
\label{eq:direct-aligned-Hessian-critical}
\end{align}
\end{lemma}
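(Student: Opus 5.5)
The identity is purely algebraic, and I will prove it by expanding $f$ along the line $t\mapsto (\cU+t\cD_{\cU},\cV+t\cD_{\cV})$. Write $\mathcal E=\cU*\cV^\ast-\mathcal X_\star$ and $\mathcal B=\cU^\ast*\cU-\cV^\ast*\cV$. The second directional derivative is
\[
D^2f[\cD,\cD]=\|\mathcal M(\cD_{\cU}*\cV^\ast+\cU*\cD_{\cV}^\ast)\|_2^2+2\langle \mathcal M(\mathcal E),\mathcal M(\cD_{\cU}*\cD_{\cV}^\ast)\rangle
+\tfrac14\|\dot{\mathcal B}\|_F^2+\tfrac12\langle \mathcal B,\cD_{\cU}^\ast*\cD_{\cU}-\cD_{\cV}^\ast*\cD_{\cV}\rangle_F ,
\]
where $\dot{\mathcal B}=\cD_{\cU}^\ast*\cU+\cU^\ast*\cD_{\cU}-\cD_{\cV}^\ast*\cV-\cV^\ast*\cD_{\cV}$. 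The first derivative is
\[
Df[\cD]=\langle \mathcal M(\mathcal E),\mathcal M(\cD_{\cU}*\cV^\ast+\cU*\cD_{\cV}^\ast)\rangle+\tfrac14\langle\mathcal B,\dot{\mathcal B}\rangle_F .
\]

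The key step is to substitute $\cU=\mathcal A_{\cU}+\cD_{\cU}$ and $\cV=\mathcal A_{\cV}+\cD_{\cV}$, so that $\mathcal X_\star=\mathcal A_{\cU}*\mathcal A_{\cV}^\ast$. This works because $\mathcal Q_{\mathrm{opt}}*\mathcal Q_{\mathrm{opt}}^\ast=\mathcal I$. Then
\[
\mathcal E=\cD_{\cU}*\cV^\ast+\cU*\cD_{\cV}^\ast-\cD_{\cU}*\cD_{\cV}^\ast .
\]
Setting $\mathcal L:=\cD_{\cU}*\cV^\ast+\cU*\cD_{\cV}^\ast$ gives $\mathcal L=\mathcal E+\mathcal P$ with $\mathcal P:=\cD_{\cU}*\cD_{\cV}^\ast$.

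The balancing term behaves the same way. Balancedness of $\cW_\star$ gives $\mathcal A_{\cU}^\ast*\mathcal A_{\cU}=\mathcal A_{\cV}^\ast*\mathcal A_{\cV}$. Expanding, $\mathcal B=\dot{\mathcal B}-\mathcal G$ with $\mathcal G:=\cD_{\cU}^\ast*\cD_{\cU}-\cD_{\cV}^\ast*\cD_{\cV}$, so $\dot{\mathcal B}=\mathcal B+\mathcal G$.

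With $e:=\mathcal M(\mathcal E)$ and $p:=\mathcal M(\mathcal P)$, the measurement part of the Hessian is
\[
\|e+p\|^2+2\langle e,p\rangle=\|e\|^2+4\langle e,p\rangle+\|p\|^2,
\]
and the measurement part of $Df[\cD]$ is $\langle e,e+p\rangle=\|e\|^2+\langle e,p\rangle$. Subtracting $4\times$ the latter leaves $\|p\|^2-3\|e\|^2$. The cross term $\langle e,p\rangle$ cancels exactly, which is what the coefficient $4$ is designed for.

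For the balancing part, the Hessian contribution is
\[
\tfrac14\|\mathcal B+\mathcal G\|^2+\tfrac12\langle\mathcal B,\mathcal G\rangle=\tfrac14\|\mathcal B\|^2+\langle\mathcal B,\mathcal G\rangle+\tfrac14\|\mathcal G\|^2 .
\]
Four times the gradient contribution is $\langle\mathcal B,\mathcal B+\mathcal G\rangle$. Subtracting gives $\tfrac14\|\mathcal G\|^2-\tfrac34\|\mathcal B\|^2$. Adding back $4Df[\cD]$ then yields \eqref{eq:direct-aligned-Hessian}. At a critical point $Df=0$, which gives \eqref{eq:direct-aligned-Hessian-critical}.

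The main place to be careful is the correct second-order terms of $f$. For the quadratic-in-$t$ coefficient of $\tfrac18\|\mathcal B(t)\|^2$, note $\mathcal B(t)=\mathcal B+t\dot{\mathcal B}+t^2\mathcal G$. The second derivative is therefore $\tfrac18\bigl(2\|\dot{\mathcal B}\|^2+4\langle\mathcal B,\mathcal G\rangle\bigr)$, which matches the terms above. Likewise $\mathcal E(t)=\mathcal E+t\mathcal L+t^2\mathcal P$.

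The expansions of $\mathcal E$ and $\mathcal B$ in terms of the aligned errors use only the t-product identities and the fact that $\mathcal Q_{\mathrm{opt}}$ is t-orthogonal. The PSD alignment property is not needed for this lemma.
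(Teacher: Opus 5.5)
Your proposal is correct and follows essentially the same route as the paper. Both arguments expand $f$ along $t\mapsto(\cU+t\cD_{\cU},\cV+t\cD_{\cV})$, use $\mathcal A_{\cU}*\mathcal A_{\cV}^\ast=\mathcal X_\star$ and the balancedness of the aligned factors to rewrite the linear coefficients as $\mathcal E+\cD_{\cU}*\cD_{\cV}^\ast$ and $\mathcal B+\cD_{\cU}^\ast*\cD_{\cU}-\cD_{\cV}^\ast*\cD_{\cV}$, and subtract $4Df[\cD]$ so that the cross terms cancel.
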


\begin{proof}
Consider the path
\(
    \cU(t)
    :=
    \cU+t\cD_{\cU}, 
    \cV(t)
    :=
    \cV+t\cD_{\cV}.
\)
Since
\( \mathcal A_{\cU}*\mathcal A_{\cV}^\ast =  \mathcal X_\star,
\)
we have 
\begin{align*}
    \cD_{\cU}  * \cV^\ast  +   \cU  *   \cD_{\cV}^\ast &= \cD_{\cU}  * (\mathcal A_{\mathcal{V}}+\cD_{\mathcal{V}})^\ast  +   (\mathcal A_{\cU}+\cD_{\cU})  *   \cD_{\cV}^\ast\\
    &= (\mathcal A_{\cU}+\cD_{\cU})  * (\mathcal A_{\mathcal{V}}+\cD_{\mathcal{V}})^\ast  -  \mathcal A_{\mathcal{U}} *   \mathcal A_{\cV}^\ast +\cD_{\cU}   *   \cD_{\cV}^\ast\\
   &=\mathcal{U}*\mathcal{V}^\ast-\mathcal{X}_\star+\cD_{\cU}   *   \cD_{\cV}^\ast=\mathcal E(\cU,\cV) +\cD_{\cU}   *   \cD_{\cV}^\ast. 
\end{align*} Therefore, we have that 
\begin{align*}
    \mathcal E  \bigl(  \cU(t),\cV(t)   \bigr)=& \mathcal E(\cU,\cV)  +  t  \left[  \cD_{\cU}  * \cV^\ast  +   \cU  *   \cD_{\cV}^\ast   \right]   +  t^2    \cD_{\cU}  *  \cD_{\cV}^\ast\notag\\
=&\mathcal E(\cU,\cV)  +  t  \left[  \mathcal E(\cU,\cV) +\cD_{\cU}   *   \cD_{\cV}^\ast \right]   +  t^2    \cD_{\cU}  *  \cD_{\cV}^\ast.
\end{align*}

Similarly, since the aligned ground-truth factors are balanced,
\(
    \mathcal A_{\cU}^\ast  *  \mathcal A_{\cU}    = \mathcal A_{\cV}^\ast *  \mathcal A_{\cV},
\)
we obtain
\begin{align*}
 \mathcal B  \bigl(   \cU(t),\cV(t)  \bigr)
    =
    \mathcal B(\cU,\cV)
    +  t \left[   \mathcal B(\cU,\cV)
        +  \cD_{\cU}^\ast  *  \cD_{\cU}
        - \cD_{\cV}^\ast   * \cD_{\cV}
    \right]  +  t^2  \left[    \cD_{\cU}^\ast
        *  \cD_{\cU}  -  \cD_{\cV}^\ast  *  \cD_{\cV}
    \right].
\end{align*}
Differentiating \(   f(\cU(t),\cV(t))  =  \frac12 \left\| \mathcal M   \bigl(  \mathcal E(\cU(t),\cV(t))  \bigr)  \right\|_2^2  +  \frac18  \left\|  \mathcal B(\cU(t),\cV(t))   \right\|_F^2 \)
at \(t=0\) gives
\begin{align}
 Df(\cU,\cV)
    [
        \cD_{\cU},
        \cD_{\cV}]
   =&
    \left\langle
        \mathcal M
        \bigl(
            \mathcal E(\cU,\cV)
        \bigr),
        \mathcal M
        \left(
            \mathcal E(\cU,\cV)
            +
            \cD_{\cU}
            *
            \cD_{\cV}^\ast
        \right)
    \right\rangle\notag\\
    &+
    \frac14
    \left\langle
        \mathcal B(\cU,\cV),
        \mathcal B(\cU,\cV)
        +
        \cD_{\cU}^\ast
        *
        \cD_{\cU}
        -
        \cD_{\cV}^\ast
        *
        \cD_{\cV}
    \right\rangle.
\label{eq:direct-aligned-first-derivative}
\end{align}
Differentiating a second time gives
\begin{align}
     D^2f(\cU,\cV)
    \big[
        (\cD_{\cU},\cD_{\cV}),
        (\cD_{\cU},&\cD_{\cV})
    \big]
   = \left\| \mathcal M  \left(  \mathcal E(\cU,\cV)
            + \cD_{\cU}  *  \cD_{\cV}^\ast
        \right) \right\|_2^2
    +  2   \left\langle  \mathcal M
        \bigl(     \mathcal E(\cU,\cV)   \bigr),
        \mathcal M  \bigl(   \cD_{\cU}      *   \cD_{\cV}^\ast  \bigr)
    \right\rangle  \notag\\
    & 
    +  \frac14   \left\|  \mathcal B(\cU,\cV)
        +  \cD_{\cU}^\ast  *  \cD_{\cU}
        - \cD_{\cV}^\ast
        *
        \cD_{\cV}
    \right\|_F^2    +
    \frac12
    \left\langle
        \mathcal B(\cU,\cV),
        \cD_{\cU}^\ast
        *
        \cD_{\cU}
        -
        \cD_{\cV}^\ast
        *
        \cD_{\cV}
    \right\rangle.
\label{eq:direct-aligned-second-derivative}
\end{align}
Expanding the squared norms in \eqref{eq:direct-aligned-second-derivative}, we obtain
\begin{align*}
     &D^2f(\cU,\cV)
    \big[ (\cD_{\cU},\cD_{\cV}),   (\cD_{\cU},\cD_{\cV})  \big]=
    \left\|  \mathcal M  \bigl(  \mathcal E(\cU,\cV)
        \bigr)  \right\|_2^2  +  4  \left\langle
        \mathcal M \bigl( \mathcal E(\cU,\cV)
        \bigr), \mathcal M  \bigl(  \cD_{\cU}
            *  \cD_{\cV}^\ast  \bigr)
    \right\rangle
    \\
  \qquad  &
    +   \left\|
        \mathcal M
        \bigl(
            \cD_{\cU}
            *
            \cD_{\cV}^\ast
        \bigr)
    \right\|_2^2
    +
    \frac14
    \left\|
        \mathcal B(\cU,\cV)
    \right\|_F^2
    +
    \left\langle
        \mathcal B(\cU,\cV),
        \cD_{\cU}^\ast
        *
        \cD_{\cU}
        -
        \cD_{\cV}^\ast
        *
        \cD_{\cV}
    \right\rangle
    +
    \frac14
    \left\|
        \cD_{\cU}^\ast
        *
        \cD_{\cU}
        -
        \cD_{\cV}^\ast
        *
        \cD_{\cV}
    \right\|_F^2.
\end{align*}
On the other hand, multiplying \eqref{eq:direct-aligned-first-derivative} by \(4\) gives
\begin{align*}
    4Df(\cU,\cV)
    [   \cD_{\cU}, \cD_{\cV}
    ]=&
    4
    \left\|
        \mathcal M
        \bigl(
            \mathcal E(\cU,\cV)
        \bigr)
    \right\|_2^2
    +
    4
    \left\langle
        \mathcal M
        \bigl(
            \mathcal E(\cU,\cV)
        \bigr),
        \mathcal M
        \bigl(
            \cD_{\cU}
            *
            \cD_{\cV}^\ast
        \bigr)
    \right\rangle
    \\
    &\qquad
    +
    \left\|
        \mathcal B(\cU,\cV)
    \right\|_F^2
    +
    \left\langle
        \mathcal B(\cU,\cV),
        \cD_{\cU}^\ast
        *
        \cD_{\cU}
        -
        \cD_{\cV}^\ast
        *
        \cD_{\cV}
    \right\rangle.
\end{align*}
Substituting this expression into the preceding expansion proves \eqref{eq:direct-aligned-Hessian}. At a critical point, the directional derivative vanishes, which gives \eqref{eq:direct-aligned-Hessian-critical}.
\end{proof}
\subsubsection{Proof of the global landscape theorem}

\begin{proof}[Proof of Theorem~\ref{thm:global-strict-saddle}]

{Every element of \(\mathcal S_\star\) has zero reconstruction residual and zero balancing residual, and hence has objective value zero. Since \(f\geq0\), every element of \(\mathcal S_\star\) is a global minimizer. The reverse inclusion will follow from the negative-curvature argument below.}

We next derive a curvature estimate that is valid at every point. We begin with the two positive terms in the aligned Hessian identity. Parseval's identity and the corresponding matrix trace identities on each Fourier slice give
\begin{align}
    &\left\|
        \cD_{\cU}^\ast*\cD_{\cU}
        -
        \cD_{\cV}^\ast*\cD_{\cV}
    \right\|_F^2
    =
    \left\|
        \cD_{\cU}*\cD_{\cU}^\ast
    \right\|_F^2
    +
    \left\|
        \cD_{\cV}*\cD_{\cV}^\ast
    \right\|_F^2
    -
    2
    \left\|
        \cD_{\cU}*\cD_{\cV}^\ast
    \right\|_F^2.
\label{eq:direct-balance-error-identity}
\end{align}
On the other hand, the block structure of \(\cD*\cD^\ast\) gives
\begin{align}
    \|\cD*\cD^\ast\|_F^2
    &=
    \left\|
        \cD_{\cU}*\cD_{\cU}^\ast
    \right\|_F^2
    +
    \left\|
        \cD_{\cV}*\cD_{\cV}^\ast
    \right\|_F^2
    +  2     \left\|
        \cD_{\cU}*\cD_{\cV}^\ast
    \right\|_F^2.
\label{eq:stacked-quadratic-error}
\end{align}
Therefore, we have 
\begin{equation*}
 \left\|      \cD*\cD^\ast \right\|_F^2=4    \left\| \cD_{\cU}*\cD_{\cV}^\ast \right\|_F^2+\left\|
        \cD_{\cU}^\ast*\cD_{\cU}-  \cD_{\cV}^\ast*\cD_{\cV}
    \right\|_F^2.
\end{equation*}
Hence \(4    \left\| \cD_{\cU}*\cD_{\cV}^\ast \right\|_F^2\leq  \left\|      \cD*\cD^\ast \right\|_F^2 \). 
Since \(  \operatorname{rank}_{\mathrm t}  \bigl(  \cD_{\cU}*   \cD_{\cV}^\ast  \bigr)  \leq r, \)
the t-RIP, together with \eqref{eq:direct-balance-error-identity} and \eqref{eq:stacked-quadratic-error}, yields
\begin{align}
     4
    \left\|  \mathcal M   \bigl(    \cD_{\cU}*
            \cD_{\cV}^\ast  \bigr) \right\|_2^2
    +  \left\|   \cD_{\cU}^\ast*\cD_{\cU}  -  \cD_{\cV}^\ast*\cD_{\cV}  \right\|_F^2
    \leq&
    4(1+\delta)
    \left\|
        \cD_{\cU}*
        \cD_{\cV}^\ast
    \right\|_F^2
    +
    \left\|
        \cD_{\cU}^\ast*\cD_{\cU}
        -
        \cD_{\cV}^\ast*\cD_{\cV}
    \right\|_F^2
    \notag\\
    =&
    \|\cD*\cD^\ast\|_F^2
    +
    4\delta
    \left\|
        \cD_{\cU}*
        \cD_{\cV}^\ast
    \right\|_F^2
    \leq
    (1+\delta)
    \|\cD*\cD^\ast\|_F^2.
\label{eq:positive-terms-t-RIP}
\end{align}

For notational convenience, define the aligned ground-truth factor by
\[
\mathcal A
:=
\cW_\star*\mathcal Q_{\mathrm{opt}}
=
\begin{bmatrix}
\mathcal A_{\cU}\\
\mathcal A_{\cV}
\end{bmatrix}.
\]
We now estimate the two negative terms in the aligned Hessian identity. By the definitions of \(\cW\) and \(\mathcal A\),
\begin{equation*}
    \cW*\cW^\ast
    -
    \mathcal A*\mathcal A^\ast
    =
    \begin{bmatrix}
        \cU*\cU^\ast
        -
        \mathcal A_{\cU}*
        \mathcal A_{\cU}^\ast
        &
        \cU*\cV^\ast-\mathcal X_\star
        \\
        \cV*\cU^\ast-\mathcal X_\star^\ast
        &
        \cV*\cV^\ast
        -
        \mathcal A_{\cV}*
        \mathcal A_{\cV}^\ast
    \end{bmatrix}.
\end{equation*}
Using
\( \mathcal A_{\cU}*   \mathcal A_{\cV}^\ast
    =  \mathcal X_\star
\)
and
\(  \mathcal A_{\cU}^\ast*  \mathcal A_{\cU}  = \mathcal A_{\cV}^\ast*   \mathcal A_{\cV}, \)
a direct expansion gives
\begin{align*}   &4 \left\|  \cU*\cV^\ast-\mathcal X_\star \right\|_F^2  + \left\| \cU^\ast*\cU    -    \cV^\ast*\cV \right\|_F^2 =   \left\|   \cW*\cW^\ast   -  \mathcal A*\mathcal A^\ast \right\|_F^2    +     2    \left\|        \mathcal A_{\cU}^\ast*\cU      -    \mathcal A_{\cV}^\ast*\cV  \right\|_F^2.\end{align*}
In particular,
\begin{equation*}
    4  \left\|   \cU*\cV^\ast-\mathcal X_\star   \right\|_F^2 + \left\|  \cU^\ast*\cU  -  \cV^\ast*\cV  \right\|_F^2  \geq  \left\|     \cW*\cW^\ast  - \mathcal A*\mathcal A^\ast  \right\|_F^2.
\end{equation*}
The reconstruction residual has tubal rank at most \(2r\), so t-RIP gives
\begin{align}
    4 \left\|   \mathcal M \bigl(  \cU*\cV^\ast-\mathcal X_\star  \bigr) \right\|_2^2+\left\| \cU^\ast*\cU- \cV^\ast*\cV\right\|_F^2 \geq&4(1-\delta) \left\| \cU*\cV^\ast-\mathcal X_\star \right\|_F^2 + \left\|    \cU^\ast*\cU-   \cV^\ast*\cV \right\|_F^2 \notag\\
    \geq&(1-\delta) (4 \left\|   \cU*\cV^\ast-\mathcal X_\star   \right\|_{F}^2+\left\| \cU^\ast*\cU- \cV^\ast*\cV\right\|_F^2 )\notag\\ 
    \geq&(1-\delta)  \left\|   \cW*\cW^\ast-    \mathcal A*\mathcal A^\ast \right\|_F^2. \label{eq:negative-terms-t-RIP}
\end{align}

Applying Lemma~\ref{lem:aligned-Hessian} and then using \eqref{eq:positive-terms-t-RIP} and \eqref{eq:negative-terms-t-RIP}, we obtain
\begin{align}
    D^2f(\cU,\cV)
    \big[   (\cD_{\cU},\cD_{\cV}),   (\cD_{\cU},\cD_{\cV})
    \big]  \leq&
    \frac{1+\delta}{4}
    \|\cD*\cD^\ast\|_F^2
    -
    \frac{3(1-\delta)}{4}
    \left\|
        \cW*\cW^\ast
        -
        \mathcal A*\mathcal A^\ast
    \right\|_F^2\notag\\
   &  + 4
    Df(\cU,\cV)
    [
        \cD_{\cU},
        \cD_{\cV}
    ].
\label{eq:direct-Hessian-after-t-RIP}
\end{align}
Lemma~\ref{lem:aligned-factor-Gram} gives that
\(  \|\cD*\cD^\ast\|_F^2 \leq  2
    \left\|   \cW*\cW^\ast    -   \mathcal A*\mathcal A^\ast   \right\|_F^2.\)
Substituting this into \eqref{eq:direct-Hessian-after-t-RIP} yields
\begin{align}
    D^2f(\cU,\cV)
    \big[
        (\cD_{\cU},\cD_{\cV}),
        (\cD_{\cU},\cD_{\cV})
    \big]
    &\leq
    -\frac{1-5\delta}{4}
    \left\|
        \cW*\cW^\ast
        -
        \mathcal A*\mathcal A^\ast
    \right\|_F^2
    +
    4
    Df(\cU,\cV)
    [
        \cD_{\cU},
        \cD_{\cV}
    ].
    \label{eq:direct-master-curvature-delta}
\end{align}

 By Lemma~\ref{lem:aligned-factor-Gram}, we have  
\begin{equation}
\label{eq:Gram-error-distance-lower}
    \left\|
        \cW*\cW^\ast
        -
        \mathcal A*\mathcal A^\ast
    \right\|_F^2
    \geq
    \frac1{2r}
    \|\cD\|_F^4.
\end{equation}

Suppose now that \((\cU,\cV)\notin\mathcal S_\star\) is a critical point. Then \(\cD\neq0\) and
\(  Df(\cU,\cV)  [   \cD_{\cU},   \cD_{\cV} ]  = 0. \)
It follows from \eqref{eq:direct-master-curvature-delta} and \eqref{eq:Gram-error-distance-lower} that
\begin{align*}
    &D^2f(\cU,\cV)  \big[     (\cD_{\cU},\cD_{\cV}),    (\cD_{\cU},\cD_{\cV})  \big]  \leq  -\frac{1-5\delta}{8r}   \|\cD\|_F^4.
\end{align*}
Dividing by \(\|\cD\|_F^2\) and using \(  \|\cD\|_F   =  \operatorname{dist}\bigl(   \cW,\mathcal S_\star   \bigr)\), we conclude that
\begin{equation*}
    \lambda_{\min}
    \bigl(
        \nabla^2f(\cU,\cV)
    \bigr)
    \leq
    -\frac{(1-5\delta)
        \operatorname{dist}
        \bigl(
            (\cU,\cV),\mathcal S_\star
        \bigr)^2
    }{8r}.
\end{equation*}
Since $\delta<\frac{1}{5}$, we have \(  \lambda_{\min}
    \bigl(
        \nabla^2f(\cU,\cV)
    \bigr)<0\). 
Thus every critical point outside \(\mathcal S_\star\) is a strict saddle. Every local minimizer is a critical point with positive-semidefinite Hessian and must therefore belong to \(\mathcal S_\star\). Conversely, every point of \(\mathcal S_\star\) has objective value zero and is globally minimizing. Thus every local minimum is global, and the set of global
minimizers is exactly \(\mathcal S_\star\). This proves parts~(i) and~(ii).

It remains to prove the quantitative strict-saddle property.
Fix \(\epsilon,\gamma,\zeta>0\) satisfying  
\begin{equation}
\label{eq:strict-saddle-parameter-condition}
    \gamma+\frac{4\epsilon}{\zeta}
    \leq
    \frac{1-5\delta}{8r}\,\zeta^2.
\end{equation}
Consider any factor pair \(\cW=(\cU,\cV)\) satisfying
\[
    \|\nabla f(\cU,\cV)\|_F<\epsilon,
    \qquad
    \operatorname{dist}(\cW,\mathcal S_\star)>\zeta.
\]
Let \(\cD=\cW-\mathcal A\) be the aligned factor error, where
\(\mathcal A=\cW_\star*\mathcal Q_{\mathrm{opt}}\), and set
\[
    d:=\|\cD\|_F
    =
    \operatorname{dist}(\cW,\mathcal S_\star)
    >
    \zeta.
\]
By Cauchy--Schwarz,
\[
    Df(\cU,\cV)[\cD_{\cU},\cD_{\cV}]
    \leq
    \|\nabla f(\cU,\cV)\|_F\|\cD\|_F
    <
    \epsilon d.
\]
Since  \(\delta<1/5\),  combining
\eqref{eq:direct-master-curvature-delta} with
\eqref{eq:Gram-error-distance-lower} gives
\[
    D^2f(\cU,\cV)[\cD,\cD]
    \leq
    -\frac{1-5\delta}{8r}\,d^4
    +
    4Df(\cU,\cV)[\cD_{\cU},\cD_{\cV}]
    <
    -\frac{1-5\delta}{8r}\,d^4
    +
    4\epsilon d.
\]
Because \(d>0\), the Rayleigh quotient therefore yields
\begin{equation*}
    \lambda_{\min}\bigl(\nabla^2f(\cU,\cV)\bigr)
    \leq
    \frac{D^2f(\cU,\cV)[\cD,\cD]}{d^2}
    <
    -\frac{1-5\delta}{8r}\,d^2
    +
    \frac{4\epsilon}{d}.
\end{equation*}
Using \(d>\zeta\), \(1-5\delta>0\), and
\eqref{eq:strict-saddle-parameter-condition}, we obtain
\[
    \lambda_{\min}\bigl(\nabla^2f(\cU,\cV)\bigr)
    <
    -\frac{1-5\delta}{8r}\,\zeta^2
    +
    \frac{4\epsilon}{\zeta}
    \leq
    -\gamma.
\]
Thus, whenever both the large-gradient and proximity alternatives
fail, the negative-curvature alternative holds. Consequently,
every factor pair \(\cW=(\cU,\cV)\) satisfies at least one of
\[
    \|\nabla f(\cU,\cV)\|_F\geq\epsilon,
    \qquad
    \lambda_{\min}\bigl(\nabla^2f(\cU,\cV)\bigr)\leq-\gamma,
    \qquad
    \operatorname{dist}(\cW,\mathcal S_\star)\leq\zeta.
\]
Hence \(f\) is \((\epsilon,\gamma,\zeta)\)-strict saddle
relative to \(\mathcal S_\star\). The explicit parameters are obtained by setting \(\gamma=4\epsilon/\zeta\) and imposing equality in the parameter condition.

\end{proof}
\subsection{Multi-Rank-Dependent Landscape Geometry}
\label{subsec:rank-profile-dichotomy}

Theorem~\ref{thm:global-strict-saddle} establishes a quantitative strict-saddle estimate for arbitrary multi-ranks. We now sharpen this result by showing that a uniform multi-rank yields the matrix-like scale, whereas a nonuniform multi-rank produces additional quartically flat directions.

Define the smallest positive singular value among the Fourier slices of \(\mathcal X_\star\) by
\begin{equation*}
    \sigma_\star
    :=
    \min_{\substack{k\in[n_3]\\ \rho_k>0}}
    \sigma_{\rho_k}
    \bigl(
        \widehat{\mathcal X}_\star^{(k)}
    \bigr).
\end{equation*}

\begin{theorem}
\label{thm:rank-profile-dichotomy}
Assume the hypotheses of Theorem~\ref{thm:global-strict-saddle}.

\begin{enumerate}[label=(\roman*),leftmargin=2em]

    \item 


Suppose that \(\rho_k=r\) for every \(k\in[n_3]\). 
For any \(\epsilon,\gamma,\zeta>0\) satisfying
\begin{equation}
\label{eq:uniform-profile-parameter-condition}
    \gamma+\frac{4\epsilon}{\zeta}  \leq  (1-5\delta)(\sqrt2-1)\sigma_\star,
\end{equation}
the objective \(f\) is
\begin{equation}   \label{eq:uniform-profile-strict-saddle}
(\epsilon,\gamma,\zeta)\text{-strict saddle relative to~}\mathcal S_\star.
\end{equation}
In particular, for every \(\epsilon>0\), one may take
\[
\gamma=\frac{(1-5\delta)(\sqrt2-1)\sigma_\star}{2},
\qquad
\zeta=\frac{8\epsilon}{(1-5\delta)(\sqrt2-1)\sigma_\star}.
\]

    Moreover, at every \(\overline{\cW}=(\overline{\cU},\overline{\cV})\in\mathcal S_\star\),
    \begin{equation}
    \label{eq:uniform-Hessian-kernel}
        \ker\nabla^2f(\overline{\cW})
        =
        T_{\overline{\cW}}\mathcal S_\star
        =
        \left\{
            \bigl(
                \overline{\cU}*\mathcal K,\,
                \overline{\cV}*\mathcal K
            \bigr):
            \mathcal K^\ast=-\mathcal K
        \right\}.
    \end{equation}
    In particular, the objective has quadratic growth transverse to the solution set.

    \item Suppose that \(\rho_{k_0}<r\) for some \(k_0\in[n_3]\). Then, at every \(\overline{\cW}\in\mathcal S_\star\), there exists a unit direction
    \[
        \cD
        =
        \bigl(
            \cD_{\cU},
            \cD_{\cV}
        \bigr)
        \perp
        T_{\overline{\cW}}\mathcal S_\star
    \]
    and constants \(a_{\cD},b_{\cD}>0\) such that
    \begin{equation}
    \label{eq:heterogeneous-flat-curve}
        \operatorname{dist}
        \bigl(
            \overline{\cW}+t\cD,
            \mathcal S_\star
        \bigr)
        =
        |t|,
        \qquad
        f(\overline{\cW}+t\cD)
        =
        a_{\cD}t^4,
        \qquad
        \left\|
            \nabla f(\overline{\cW}+t\cD)
        \right\|_F
        =
        b_{\cD}|t|^3
    \end{equation}
    for all sufficiently small \(t\).

    Consequently,
    \(  T_{\overline{\cW}}\mathcal S_\star  \subsetneq \ker\nabla^2f(\overline{\cW}),  \)
    and neither local quadratic growth nor a local Polyak--\L{}ojasiewicz inequality holds around \(\mathcal S_\star\). Moreover, for any fixed \(\gamma_0>0\), the objective cannot be \((\epsilon,\gamma_0,\zeta(\epsilon))\)-strict saddle for all sufficiently small \(\epsilon\) if  \( \zeta(\epsilon) = o(\epsilon^{1/3}).  \)
\end{enumerate}
\end{theorem}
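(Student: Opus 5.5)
For part~(i) I would reuse the master curvature inequality \eqref{eq:direct-master-curvature-delta} from the proof of Theorem~\ref{thm:global-strict-saddle}, namely
\[
D^2f(\cW)[\cD,\cD]\le -\tfrac{1-5\delta}{4}\|\cW*\cW^\ast-\mathcal A*\mathcal A^\ast\|_F^2+4Df(\cW)[\cD],
\]
and replace the quartic lower bound of Lemma~\ref{lem:aligned-factor-Gram} with a quadratic one that holds when every slice has full rank \(r\). This is the tubal version of the Tu et al.\ estimate \(\|WW^H-AA^H\|_F^2\ge 2(\sqrt2-1)\sigma_r(A)^2\|W-A\|_F^2\), valid whenever \(A^HW\succeq0\). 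The plan is as follows.

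First, I apply this matrix estimate slice by slice to \(W_k=\widehat{\cW}^{(k)}\) and \(A_k=\widehat{\cW}_\star^{(k)}\widehat{\mathcal Q}_{\mathrm{opt}}^{(k)}\). The hypothesis \(A_k^HW_k\succeq0\) is exactly \eqref{eq:aligned-cross-Gram-PSD}. Because the factorization is balanced, \(\sigma_r(A_k)^2=2\sigma_r(\widehat{\mathcal X}_\star^{(k)})\ge2\sigma_\star\) whenever \(\rho_k=r\). Summing over \(k\) with Parseval then gives
\[
\|\cW*\cW^\ast-\mathcal A*\mathcal A^\ast\|_F^2\ge 4(\sqrt2-1)\sigma_\star\|\cD\|_F^2 .
\]
I would check the constant against \eqref{eq:uniform-profile-parameter-condition}: it gives \(D^2f[\cD,\cD]\le-(1-5\delta)(\sqrt2-1)\sigma_\star d^2+4\epsilon d\). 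Dividing by \(d^2\) and using \(d>\zeta\) yields \(\lambda_{\min}<-(1-5\delta)(\sqrt2-1)\sigma_\star+4\epsilon/\zeta\le-\gamma\), exactly as in part~(iii) of the earlier proof.

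Second, for the kernel identity I would compute the Hessian at \(\overline{\cW}\in\mathcal S_\star\). Since both residuals vanish there,
\[
D^2f(\overline{\cW})[\mathcal Z,\mathcal Z]=\|\mathcal M(\mathcal Z_{\cU}*\overline{\cV}^\ast+\overline{\cU}*\mathcal Z_{\cV}^\ast)\|_2^2+\tfrac14\|\overline{\cU}^\ast*\mathcal Z_{\cU}+\mathcal Z_{\cU}^\ast*\overline{\cU}-\overline{\cV}^\ast*\mathcal Z_{\cV}-\mathcal Z_{\cV}^\ast*\overline{\cV}\|_F^2 .
\]
The first argument has tubal rank at most \(2r\), so t-RIP makes this Hessian comparable to the one with \(\mathcal M\) replaced by the identity. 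A kernel vector therefore satisfies, on each slice, \(Z_UV^H+UZ_V^H=0\) together with the linearized balancing equation. With \(U,V\) of full column rank \(r\) (balanced SVD factors), the first equation forces \(Z_U=UK_1\) and \(Z_V=-VK_1^H\). The balancing equation then forces \(K_1\) to be skew-Hermitian; this is the standard matrix argument. Quadratic transverse growth follows from the master inequality at \(\overline{\cW}+t\mathcal Z\), or more simply from the quadratic Gram bound combined with t-RIP applied to \(f\) itself.

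For part~(ii), at the frequency \(k_0\) with \(\rho_{k_0}<r\), the balanced factors \(\widehat{\overline{\cU}}^{(k_0)}\) and \(\widehat{\overline{\cV}}^{(k_0)}\) share a common null vector \(q\). This holds because \(\overline U^H\overline U=\overline V^H\overline V\), so the two factors have the same kernel of dimension \(r-\rho_{k_0}\ge1\). I would set \(\widehat{\cD}^{(k_0)}=(u q^H,\,v q^H)\) with \(u\perp\operatorname{range}\overline U\), \(v\perp\operatorname{range}\overline V\), and \(\|u\|=\|v\|\), placing conjugates at the paired frequency and zero elsewhere, then normalize. With these choices the cross terms vanish: \(\overline U^Huq^H=0\) and \(uq^H\overline V^H=0\). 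Hence \(\mathcal E(\overline{\cW}+t\cD)=t^2\cD_{\cU}*\cD_{\cV}^\ast\) and \(\mathcal B=t^2(\cD_{\cU}^\ast*\cD_{\cU}-\cD_{\cV}^\ast*\cD_{\cV})=0\). This gives \(f=a t^4\) with \(a=\tfrac12\|\mathcal M(\cD_{\cU}*\cD_{\cV}^\ast)\|^2>0\) by t-RIP, and the gradient is an explicit cubic, so its norm is \(b|t|^3\). For the exact distance \(|t|\), I would argue that the Procrustes cross-Gram \(\overline{\cW}^\ast*(\overline{\cW}+t\cD)=\overline{\cW}^\ast*\overline{\cW}\succeq0\), because \(\cD\) is orthogonal to the column space of \(\overline{\cW}\). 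Hence \(\mathcal Q=\mathcal I\) is optimal and the distance is \(\|t\cD\|_F\). Orthogonality to the tangent space follows since \(\langle\overline{\cU}*\mathcal K,\cD_{\cU}\rangle=0\) by the range condition.

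The consequences then follow directly. The quartic growth of \(f\) gives \(\cD\in\ker\) with \(\cD\notin T\). The relation \(f\sim\operatorname{dist}^4\) rules out quadratic growth. The relation \(\|\nabla f\|^2\sim t^6\ll f\sim t^4\)? No: \(t^6\) versus \(t^4\) indeed violates \(\|\nabla f\|^2\ge\mu f\). For the \(o(\epsilon^{1/3})\) obstruction, I would take \(t=(\epsilon/2b)^{1/3}\). Then the gradient is below \(\epsilon\), the distance \(|t|\gg\zeta(\epsilon)\), and \(\lambda_{\min}\) is at least \(-O(t^2)>-\gamma_0\) by Hessian continuity, since the Hessian at \(\overline{\cW}\) is PSD.

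The main obstacle is the uniform-case quadratic Gram bound with the exact constant, and the kernel characterization under the general, non-slice-separable \(\mathcal M\). My first attempt at both would be to reduce to the identity operator via t-RIP applied to the rank-\(2r\) linearization.
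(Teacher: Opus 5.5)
\paragraph{Part (i).} Your plan for part (i) is the paper's proof: the slice-wise Tu et al.\ bound with \(\sigma_{\min}(A_k)^2=2\sigma_r(\widehat{\mathcal X}_\star^{(k)})\), followed by the same range and balancing argument for the kernel.

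\paragraph{Part (ii): the gap.} The problem is in part (ii), at the claim that ``the gradient is an explicit cubic.'' Your two-sided direction \(\widehat{\cD}^{(k_0)}=(uq^H,vq^H)\) with \(\|u\|=\|v\|\) removes the balancing residual. It does so by leaving a nonzero reconstruction residual \(\mathcal E=t^2\,\cD_{\cU}*\cD_{\cV}^\ast\) along the line, so the sensing operator enters the gradient. With \(\mathcal H:=\mathcal M^\ast\mathcal M(\cD_{\cU}*\cD_{\cV}^\ast)\),
\[
\nabla_{\cU}f(\overline{\cW}+t\cD)=t^2\,\mathcal H*(\overline{\cV}+t\cD_{\cV}),\qquad
\nabla_{\cV}f(\overline{\cW}+t\cD)=t^2\,\mathcal H^\ast*(\overline{\cU}+t\cD_{\cU}).
\]
If \(\mathcal M^\ast\mathcal M\) is the identity, the \(t^2\) coefficients vanish, because \(\overline{\cV}^\ast*\cD_{\cV}=0\) and \(\overline{\cU}^\ast*\cD_{\cU}=0\). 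For a general t-RIP operator, \(\mathcal H*\overline{\cV}\) and \(\mathcal H^\ast*\overline{\cU}\) need not vanish. Their pairing with \((\mathcal Z_{\cU},\mathcal Z_{\cV})\) is \(\langle\mathcal M(\cD_{\cU}*\cD_{\cV}^\ast),\mathcal M(\mathcal Z_{\cU}*\overline{\cV}^\ast+\overline{\cU}*\mathcal Z_{\cV}^\ast)\rangle\). t-RIP does not force this to equal the corresponding Frobenius pairing, which is zero. So generically \(\|\nabla f\|_F=\Theta(t^2)\), not \(b_{\cD}|t|^3\).

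This breaks two conclusions. First, \(\|\nabla f\|_F^2/f\) tends to a positive constant along your line, so the PL inequality is not refuted. Second, solving \(\|\nabla f\|_F<\epsilon\) only gives \(t\sim\epsilon^{1/2}\). That rules out \(\zeta(\epsilon)=o(\epsilon^{1/2})\), not \(o(\epsilon^{1/3})\). The rest of your argument is sound: \(f=at^4\) exactly, your cross-Gram argument for \(\operatorname{dist}=|t|\), normality, \(T_{\overline{\cW}}\mathcal S_\star\subsetneq\ker\nabla^2 f(\overline{\cW})\), and the failure of quadratic growth all hold.

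\paragraph{Fix.} Take \(v=0\), i.e.\ \(\cD=(\cD_{\cU},0)\) with \(\widehat{\cD}_{\cU}^{(k_0)}=uq^H\). This is the paper's inactive-coordinate direction, with \(q=e_j\) in the SVD gauge. Then:
\begin{itemize}
\item \(\mathcal E\equiv0\) along the line, so \(\mathcal M\) drops out of the gradient entirely.
\item \(\mathcal B=t^2\,\cD_{\cU}^\ast*\cD_{\cU}\), since \(\overline{\cU}^\ast*\cD_{\cU}=0\).
\item The potential \(t^2\) terms \(\tfrac12\overline{\cU}*\mathcal B\) and \(-\tfrac12\overline{\cV}*\mathcal B\) vanish, because \(\overline U q=\overline V q=0\) at frequency \(k_0\).
\end{itemize}
This leaves \(f=\tfrac{t^4}{8}\|\cD_{\cU}^\ast*\cD_{\cU}\|_F^2\), \(\nabla_{\cU}f=\tfrac{t^3}{2}\cD_{\cU}*\cD_{\cU}^\ast*\cD_{\cU}\), and \(\nabla_{\cV}f=0\), independently of \(\mathcal M\). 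Your distance and normality arguments carry over unchanged, since \(\overline{\cW}^\ast*\cD=\overline{\cU}^\ast*\cD_{\cU}=0\). In short, the quartic behavior must come from the balancing term, which does not involve \(\mathcal M\), not from the sensing term.
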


The proof of the uniform-profile conclusion requires the following strengthening of Lemma~\ref{lem:aligned-factor-Gram}.

\begin{lemma}
\label{lem:uniform-profile-Procrustes}
Suppose that \(\rho_k=r\) for every \(k\in[n_3]\). Then
\begin{equation}
\label{eq:uniform-Procrustes-lower-bound}
    \left\|
        \cW*\cW^\ast
        -
        \cW_\star*\cW_\star^\ast
    \right\|_F^2
    \geq
    4(\sqrt2-1)\sigma_\star
    \operatorname{dist}
    \bigl(
        \cW,\mathcal S_\star
    \bigr)^2.
\end{equation}
\end{lemma}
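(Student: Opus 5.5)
The plan is to reduce \eqref{eq:uniform-Procrustes-lower-bound} to a slice-wise matrix inequality of Tu et al.\ type \cite{tu2016low}. We use the same Procrustes alignment as in Lemma~\ref{lem:aligned-factor-Gram}, then sum over frequencies with Parseval's identity. For fixed \(k\), set \(W_k=\widehat{\cW}^{(k)}\), \(A_k=\widehat{\cW}_\star^{(k)}\widehat{\mathcal Q}_{\mathrm{opt}}^{(k)}\) and \(D_k=W_k-A_k\). By \eqref{eq:aligned-cross-Gram-PSD}, \(A_k^HW_k\succeq0\) is Hermitian. Since \(A_k^HA_k\) is also Hermitian, \(A_k^HD_k\) is Hermitian, and \(A_kA_k^H=\widehat{\cW}_\star^{(k)}\widehat{\cW}_\star^{(k)H}\).

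\emph{Step 1: spectrum of the stacked ground-truth factor.} The balanced factorization is built from the Fourier-slice SVDs \(\widehat{\mathcal X}_\star^{(k)}=L_k\Sigma_kR_k^H\), with \(\widehat{\cU}_\star^{(k)}=L_k\Sigma_k^{1/2}\) and \(\widehat{\cV}_\star^{(k)}=R_k\Sigma_k^{1/2}\) up to a common unitary factor. Hence \(\widehat{\cW}_\star^{(k)H}\widehat{\cW}_\star^{(k)}=2\Sigma_k\), where \(\Sigma_k\) is the \(r\times r\) diagonal block. Under \(\rho_k=r\) this gives \(\sigma_r(A_k)^2=2\sigma_r(\widehat{\mathcal X}_\star^{(k)})\geq2\sigma_\star>0\) for every \(k\). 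This is the only place the uniform-profile hypothesis enters. If \(\rho_k<r\) the bound degenerates, consistent with part (ii) of Theorem~\ref{thm:rank-profile-dichotomy}.

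\emph{Step 2: complex matrix Procrustes inequality.} We prove that if \(A^HD\) is Hermitian, then
\[
\|(A+D)(A+D)^H-AA^H\|_F^2\geq 2(\sqrt2-1)\sigma_r(A)^2\|D\|_F^2 .
\]
We follow Tu et al.'s real argument, replacing transposes by conjugate transposes and using \(\operatorname{Re}\operatorname{tr}\) throughout. First expand \(AD^H+DA^H+DD^H\) to obtain
\[
\|DD^H\|_F^2+2\|AD^H\|_F^2+2\operatorname{Re}\operatorname{tr}\bigl((A^HD)^2\bigr)+4\operatorname{Re}\operatorname{tr}(D^HDD^HA).
\]
Because \(A^HD\) is Hermitian, \(\operatorname{tr}((A^HD)^2)=\|A^HD\|_F^2\geq0\). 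The cross term is then absorbed by completing a square: write it as \(\|DD^H+\sqrt2\,\cdot\,\text{(mixed term)}\|^2\) minus a remainder. Choosing the weight optimally yields the factor \(2(\sqrt2-1)\) multiplying \(\|AD^H\|_F^2\geq\sigma_r(A)^2\|D\|_F^2\). I expect this step to be the main obstacle. The delicate points are the cross term \(\operatorname{Re}\operatorname{tr}(D^HDD^HA)\), which may be negative, and the verification that Hermitian symmetry of \(A^HD\) (rather than only \(A^HW\succeq0\)) is enough to control it in the complex setting. If the square completion proves awkward, the fallback is the cleaner route in the spirit of Lemma~\ref{lem:aligned-factor-Gram}. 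That route splits into the regime \(\|D\|\) small, where the \(2\|AD^H\|^2\) term dominates, and the regime \(\|D\|\) large, where \(\|DD^H\|^2\) dominates, and interpolates between them. The constant \(\sqrt2-1\) arises from optimizing that split.

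\emph{Step 3: summation.} Combining Steps 1 and 2 gives, for every \(k\),
\[
\|W_kW_k^H-A_kA_k^H\|_F^2\geq4(\sqrt2-1)\sigma_\star\|D_k\|_F^2 .
\]
Summing over \(k\) with weight \(1/n_3\), Parseval's identity \eqref{eq:parseval-tensor} turns the left-hand side into \(\|\cW*\cW^\ast-\cW_\star*\cW_\star^\ast\|_F^2\) and the right-hand side into \(4(\sqrt2-1)\sigma_\star\|\cD\|_F^2\). Finally, \(\|\cD\|_F=\operatorname{dist}(\cW,\mathcal S_\star)\), which proves \eqref{eq:uniform-Procrustes-lower-bound}. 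Conjugate symmetry plays no further role, since the inequality is applied slice by slice.
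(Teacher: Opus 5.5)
Your overall structure matches the paper's: a slice-wise Tu et al.\ inequality, then $\sigma_r(A_k)^2=2\sigma_r(\widehat{\mathcal X}_\star^{(k)})\ge2\sigma_\star$, then Parseval. Steps~1 and~3 are fine. The paper invokes \cite[Lemma~5.4]{tu2016low} through realification, whereas you redo the argument over $\mathbb C$.

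The gap is in Step~2. The matrix inequality you propose to prove, assuming only that $A^HD$ is Hermitian, is false. Take $D=-2A$. Then $A^HD=-2A^HA$ is Hermitian and $(A+D)(A+D)^H-AA^H=0$. Yet $2(\sqrt2-1)\sigma_r(A)^2\|D\|_F^2=8(\sqrt2-1)\sigma_r(A)^2\|A\|_F^2>0$. So no choice of square-completion weight can close the argument under your hypothesis. The same example defeats your fallback regime split: here $\|D\|$ is large, but $\|DD^H\|_F^2+2\|AD^H\|_F^2+2\|A^HD\|_F^2$ is exactly cancelled by the cross term $4\operatorname{tr}(D^HD\,A^HD)$. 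That sign-indefinite cross term, which you flagged as the main obstacle, is where you need the positive semidefiniteness $A_k^HW_k\succeq0$ from \eqref{eq:aligned-cross-Gram-PSD}. You recorded that fact and then weakened it to Hermitian symmetry.

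With $W=A+D$ and $A^HW\succeq0$, complete the square at the $r\times r$ Gram level:
\[
\|WW^H-AA^H\|_F^2=\bigl\|D^HD+\sqrt2\,A^HD\bigr\|_F^2+(4-2\sqrt2)\operatorname{tr}\bigl(D^HD\,A^HD\bigr)+2\operatorname{tr}\bigl(D^HD\,A^HA\bigr).
\]
All traces here are real, since each is the trace of a product of Hermitian matrices. Now substitute $A^HD=A^HW-A^HA$ in the middle term. The last two terms become
\[
(4-2\sqrt2)\operatorname{tr}\bigl(D^HD\,A^HW\bigr)+(2\sqrt2-2)\operatorname{tr}\bigl(D^HD\,A^HA\bigr)\;\ge\;2(\sqrt2-1)\,\sigma_r(A)^2\|D\|_F^2 .
\]
The inequality holds because the trace of a product of two positive semidefinite matrices is nonnegative, and $\operatorname{tr}(D^HD\,A^HA)\ge\sigma_r(A)^2\|D\|_F^2$. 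This is Tu et al.'s argument with transposes replaced by conjugate transposes. With this correction your Step~2 goes through and gives a direct complex proof that avoids the realification the paper relies on.
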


\begin{proof}
Let \(\mathcal Q_{\mathrm{opt}}\) and \(\cD\) be the Procrustes minimizer and aligned error introduced in Subsection~\ref{subsubsec:tubal-Procrustes}. For each \(k\in[n_3]\), set
\(
    W_k   := \widehat{\cW}^{(k)},
    A_k:= \widehat{\cW}_\star^{(k)} \widehat{\mathcal Q}_{\mathrm{opt}}^{(k)}, 
    D_k := W_k-A_k.
\)
The Procrustes condition gives \(A_k^HW_k\succeq0\). Since \(\rho_k=r\), the matrix \(A_k\) has full column rank. 
 By the standard realification argument, \cite[Lemma~5.4]{tu2016low} extends to complex matrices and gives 
\begin{equation}
\label{eq:uniform-slice-Procrustes-lower}
    \|W_kW_k^H-A_kA_k^H\|_F^2
    \geq
    2(\sqrt2-1)
    \sigma_{\min}(A_k)^2
    \|D_k\|_F^2.
\end{equation}
Because the ground-truth factors are balanced, \(A_k^HA_k\) is unitarily similar to \(2\Sigma_k\), where \(\Sigma_k\) contains the singular values of \(\widehat{\mathcal X}_\star^{(k)}\). Therefore \( \sigma_{\min}(A_k)^2 =2\sigma_r \bigl(\widehat{\mathcal X}_\star^{(k)}\bigr) \geq 2\sigma_\star.\)
Substituting this into \eqref{eq:uniform-slice-Procrustes-lower}, summing over \(k\), and applying Parseval's identity proves \eqref{eq:uniform-Procrustes-lower-bound}.
\end{proof}

\begin{proof}[Proof of Theorem~\ref{thm:rank-profile-dichotomy}]
\begin{itemize}
    \item [(i)]
We first assume that the Fourier rank profile is uniform. Let
\(  d  := \operatorname{dist} \bigl(\cW,\mathcal S_\star \bigr)= \|\cD\|_F.\)
Suppose that \(\|\nabla f(\cW)\|_F<\epsilon\). By Cauchy--Schwarz,
\(  Df(\cU,\cV)  [ \cD_{\cU}, \cD_{\cV} ] \leq \epsilon d.
\) 
Combining the Gram-level curvature estimate \eqref{eq:direct-master-curvature-delta} with Lemma~\ref{lem:uniform-profile-Procrustes} gives
\begin{equation}
\label{eq:uniform-profile-curvature}
    D^2f(\cW) [   \cD,\cD ]
    \leq -(\sqrt2-1)(1-5\delta)
    \sigma_\star d^2
    +
    4\epsilon d.
\end{equation}
If \(d>\zeta\), then dividing \eqref{eq:uniform-profile-curvature} by \(d^2\) yields \(\lambda_{\min}\bigl(   \nabla^2f(\cW) \bigr) \leq  -(\sqrt2-1)(1-5\delta)
    \sigma_\star +4\epsilon/\zeta.\) According to \eqref{eq:uniform-profile-parameter-condition}, we thus have \(\lambda_{\min}\bigl(   \nabla^2f(\cW) \bigr) \leq-\gamma\).  This proves \eqref{eq:uniform-profile-strict-saddle}.  The explicit parameters are obtained by setting \(\gamma=4\epsilon/\zeta\) and imposing equality in the parameter condition.

The same Gram estimate also gives quadratic growth. Indeed, the residual estimate established in the proof of Theorem~\ref{thm:global-strict-saddle} implies
\(f(\cW)\geq \frac{1-\delta}{8} \left\|   \cW*\cW^\ast - \cW_\star*\cW_\star^\ast \right\|_F^2.\) 
Hence Lemma~\ref{lem:uniform-profile-Procrustes} gives
\begin{equation*}
    f(\cW)
    \geq
    \frac{(1-\delta)(\sqrt2-1)}{2}
    \sigma_\star
    \operatorname{dist}
    \bigl(
        \cW,\mathcal S_\star
    \bigr)^2.
\end{equation*}
It remains to identify the Hessian kernel. Let \(\overline{\cW}=(\overline{\cU},\overline{\cV})\in\mathcal S_\star\), and let \(\mathcal Z=(\mathcal Z_{\cU},\mathcal Z_{\cV})\). Since $\mathcal{E}(\overline{\cW})=0$ and $\mathcal{B}(\overline{\cW})=0$,
\begin{align}
    D^2f(\overline{\cW})
    [\mathcal Z,\mathcal Z]
    &=
    \left\|
        \mathcal M
        \bigl(
            \mathcal Z_{\cU}*\overline{\cV}^{\ast}
            +
            \overline{\cU}*\mathcal Z_{\cV}^{\ast}
        \bigr)
    \right\|_2^2
    +
    \frac14
    \left\|
        \overline{\cU}^{\ast}*\mathcal Z_{\cU}
        +
        \mathcal Z_{\cU}^{\ast}*\overline{\cU}
        -
        \overline{\cV}^{\ast}*\mathcal Z_{\cV}
        -
        \mathcal Z_{\cV}^{\ast}*\overline{\cV}
    \right\|_F^2.
\label{eq:Hessian-at-global-minimizer}
\end{align}
Every tangent direction in \eqref{eq:uniform-Hessian-kernel} makes both terms vanish, so \(  T_{\overline{\cW}}\mathcal S_\star\subseteq \ker\nabla^2f(\overline{\cW}).\)
\\
Conversely, suppose that \(\mathcal Z\) belongs to the Hessian kernel. Since the tensor in the first term of \eqref{eq:Hessian-at-global-minimizer} has tubal rank at most \(2r\), t-RIP implies
\( \mathcal Z_{\cU}*\overline{\cV}^{\ast} + \overline{\cU}*\mathcal Z_{\cV}^{\ast} = 0.\)
At each Fourier frequency, \(\widehat{\overline{\cU}}^{(k)}\) and \(\widehat{\overline{\cV}}^{(k)}\) have full column rank. Projecting the last identity onto the orthogonal complements of their column spaces shows that
\[
    \widehat{\mathcal Z}_{\cU}^{(k)}
    =
    \widehat{\overline{\cU}}^{(k)}K_k,
    \qquad
    \widehat{\mathcal Z}_{\cV}^{(k)}
    =
    \widehat{\overline{\cV}}^{(k)}L_k
\]
for some \(K_k,L_k\in\mathbb C^{r\times r}\), and the product identity gives   \( K_k+L_k^H=0.\) 
Let
\[ G_k:= \widehat{\overline{\cU}}^{(k)H} \widehat{\overline{\cU}}^{(k)} = \widehat{\overline{\cV}}^{(k)H} \widehat{\overline{\cV}}^{(k)} \succ0.\]
The vanishing of the second term in \eqref{eq:Hessian-at-global-minimizer} then gives
\[ G_k(K_k+K_k^H)  + (K_k+K_k^H)G_k  = 0.\] 
Set \(S_k:=K_k+K_k^H\). Since \(S_k=S_k^H\), taking the real Frobenius inner product with \(S_k\) yields
\[
\begin{aligned}
0   &=    \langle  S_k, G_kS_k+S_kG_k \rangle_F = 2\operatorname{tr}(S_kG_kS_k)= 2\|G_k^{1/2}S_k\|_F^2.
\end{aligned}
\]
Because \(G_k\succ0\), its square root is invertible, so \(S_k=0\). Thus \(K_k^H=-K_k\). Hence \(L_k=K_k\). The matrices \(K_k\) satisfy the required conjugate-symmetry conditions and therefore assemble into a real tensor \(\mathcal K\) satisfying \(\mathcal K^\ast=-\mathcal K\). This proves \eqref{eq:uniform-Hessian-kernel} and completes part~(i).
\item [(ii)]
We now assume that \(\rho_{k_0}<r\) for some \(k_0\). By the t-orthogonal invariance of the objective, it suffices to construct the flat direction at the canonical balanced factorization \((\cU_\star,\cV_\star)\). Let \(\widehat{\cX}_\star^{(k_0)}=P_{k_0}\Sigma_{k_0}Q_{k_0}^H\) be a compact SVD, where \(P_{k_0}\) and \(Q_{k_0}\) each have \(\rho_{k_0}\) orthonormal columns.
Choose the SVD gauge in which
\[
    \widehat{\cU}_\star^{(k_0)}
    =
    P_{k_0}\Sigma_{k_0}^{1/2}
    \begin{bmatrix}
        I_{\rho_{k_0}} & 0
    \end{bmatrix},
    \qquad
    \widehat{\cV}_\star^{(k_0)}
    =
    Q_{k_0}\Sigma_{k_0}^{1/2}
    \begin{bmatrix}
        I_{\rho_{k_0}} & 0
    \end{bmatrix}.
\]
Choose \(j\in\{\rho_{k_0}+1,\ldots,r\}\) and a unit vector \(a_{k_0}\perp\operatorname{range}(P_{k_0})\), and define
\[
    \widehat{\cD}_{\cU}^{(k_0)}
    :=
    a_{k_0}e_j^T,
    \qquad
    \widehat{\cD}_{\cV}^{(k_0)}
    :=
    0.
\]
At the conjugate frequency, use the complex-conjugate slice, and set all remaining slices equal to zero. At a self-conjugate frequency, choose \(a_{k_0}\) real. After normalization, this defines a real unit direction \(\cD=(\cD_{\cU},0)\) satisfying
\begin{equation}
\label{eq:inactive-direction-identities}
    \cD_{\cU}*\cV_\star^\ast
    =
    0,
    \qquad
    \cU_\star^\ast*\cD_{\cU}
    =
    0.
\end{equation}
Consider \( \cW(t) := \bigl(   \cU_\star+t\cD_{\cU},   \cV_\star\bigr).\)
The first identity in \eqref{eq:inactive-direction-identities} shows that the reconstruction residual vanishes identically along this curve. Since the ground-truth factors are balanced and the perturbation lies in an inactive factor direction,
\( \mathcal B\bigl(\cW(t)\bigr)= t^2 \cD_{\cU}^{\ast} * \cD_{\cU}.
\) 
Consequently,
\begin{equation*}
 f\bigl(\cW(t)\bigr) =  \frac{t^4}{8} \left\| \cD_{\cU}^{\ast}  * \cD_{\cU}  \right\|_F^2, \quad  \nabla_{\cU}f\bigl(\cW(t)\bigr) =  \frac{t^3}{2}  \cD_{\cU}  * \bigl(    \cD_{\cU}^{\ast}   *    \cD_{\cU} \bigr), \quad    \nabla_{\cV}f\bigl(\cW(t)\bigr)   = 0. \end{equation*}
This gives the objective and gradient identities in \eqref{eq:heterogeneous-flat-curve}.

At frequency \(k_0\), every tensor of the form \(\widehat{\cU}_\star^{(k_0)}\widehat{\mathcal Q}^{(k_0)}\) has its columns in \(\operatorname{range}(P_{k_0})\), whereas \(\widehat{\cD}_{\cU}^{(k_0)}\) is orthogonal to this space. It follows that 
\[\operatorname{dist}\bigl(      \cW(t),\mathcal S_\star\bigr) = |t|.\] 
The same orthogonality shows that \(\cD\perp T_{\cW_\star}\mathcal S_\star\). 
Substituting $\cD$ and $\mathcal{W}_\star$ into \eqref{eq:Hessian-at-global-minimizer} gives 
  \(D^2f(\cW_\star) [\cD,\cD]  = 0. \) 
The Hessian is positive semidefinite at the global minimizer, so \(\cD\in\ker\nabla^2f(\cW_\star)\). Transporting the direction by the common t-orthogonal action gives the corresponding conclusion at every point of \(\mathcal S_\star\).

Finally,
\[  \frac{   f(\cW(t))  }{    \operatorname{dist}      (\cW(t),\mathcal S_\star)^2   }  =   a_{\cD}t^2  \longrightarrow0, \qquad  \frac{   \|\nabla f(\cW(t))\|_F^2  }{  f(\cW(t))  } =  \frac{b_{\cD}^2}{a_{\cD}}t^2  \longrightarrow0. \]
Thus local quadratic growth and the local Polyak--\L{}ojasiewicz inequality both fail.

We finally show that the cubic-root proximity scale is unavoidable in the heterogeneous-profile case. Fix \(\gamma_0>0\). Since \(\overline{\cW}\) is a global minimizer, \(\nabla^2f(\overline{\cW})\) is positive semidefinite. Moreover, the normal direction \(\cD\) constructed above belongs to its kernel. Hence
\(  \lambda_{\min}  \bigl( \nabla^2f(\overline{\cW})    \bigr)   =     0.
\)
The Hessian depends continuously on the factor pair. Therefore, there exists \(t_0>0\) such that
\( \lambda_{\min}  \bigl(  \nabla^2f(\cW(t)) \bigr) > -\gamma_0\) whenever \(|t|<t_0\).

Recall from \eqref{eq:heterogeneous-flat-curve} that, for all sufficiently small \(t\),
\[
    \left\|
        \nabla f(\cW(t))
    \right\|_F
    =
    b_{\cD}|t|^3,
    \qquad
    \operatorname{dist}
    \bigl(
        \cW(t),\mathcal S_\star
    \bigr)
    =
    |t|,
\]
where \(b_{\cD}>0\) is independent of \(t\). Given \(\epsilon>0\), set \( t_\epsilon   :=  \left(    \frac{\epsilon}{2b_{\cD}}
    \right)^{1/3}.
\)
This value is chosen so that the gradient norm along the flat curve is strictly smaller than the threshold \(\epsilon\). Indeed,
\begin{align}
    \left\|  \nabla f  \bigl(\cW(t_\epsilon)  \bigr) \right\|_F= b_{\cD}t_\epsilon^3= b_{\cD}\left(    \frac{\epsilon}{2b_{\cD}} \right) = \frac{\epsilon}{2}<\epsilon.
\label{eq:gradient-at-t-epsilon}
\end{align}
At the same time,
\begin{align}
    \operatorname{dist}
    \bigl( \cW(t_\epsilon), \mathcal S_\star \bigr)=
    t_\epsilon=\left(
        \frac{\epsilon}{2b_{\cD}}
    \right)^{1/3}= (2b_{\cD})^{-1/3}  \epsilon^{1/3}.
\label{eq:distance-at-t-epsilon}
\end{align}

Since \(t_\epsilon\to0\) as \(\epsilon\to0\), for all sufficiently small \(\epsilon\) we have \(t_\epsilon<t_0\). Therefore,
\begin{equation}
\label{eq:Hessian-at-t-epsilon}
    \lambda_{\min}
    \bigl(
        \nabla^2f
        \bigl(
            \cW(t_\epsilon)
        \bigr)
    \bigr)
    >
    -\gamma_0.
\end{equation}

Suppose that \(f\) were \((\epsilon,\gamma_0,\zeta(\epsilon))\)-strict saddle relative to \(\mathcal S_\star\). At the point \(\cW(t_\epsilon)\),   \eqref{eq:gradient-at-t-epsilon} shows that the large-gradient alternative fails, while \eqref{eq:Hessian-at-t-epsilon} shows that the negative-curvature alternative fails. Therefore, the proximity alternative must hold:
\[
    \operatorname{dist}
    \bigl(
        \cW(t_\epsilon),
        \mathcal S_\star
    \bigr)
    \leq
    \zeta(\epsilon).
\]
Using \eqref{eq:distance-at-t-epsilon}, we conclude that
\begin{equation*}
    \zeta(\epsilon)
    \geq
    (2b_{\cD})^{-1/3}
    \epsilon^{1/3}
\end{equation*}
for all sufficiently small \(\epsilon\). In particular,
\[
    \liminf_{\epsilon\to0}
    \frac{\zeta(\epsilon)}{\epsilon^{1/3}}
    \geq
    (2b_{\cD})^{-1/3}
    >
    0.
\]
Hence no proximity radius satisfying
\(\zeta(\epsilon) =  o(\epsilon^{1/3}) \)
can yield an \((\epsilon,\gamma_0,\zeta(\epsilon))\)-strict-saddle property for all sufficiently small \(\epsilon\). 
\end{itemize}
\end{proof}

\section{Numerical Illustration}
\label{sec:numerics}

The results of Section~\ref{sec:main-results} are geometric: they describe the
shape of the objective \eqref{eq:balanced-sensing-objective} over the factor
space rather than the behavior of any particular algorithm. We first visualize the benign global geometry of
Theorem~\ref{thm:global-strict-saddle} and the multi-rank–dependent local geometry
of Theorem~\ref{thm:rank-profile-dichotomy}. We then complement these geometric
illustrations with a real-OCT tensor-sensing experiment.

\subsection{Synthetic experimental setup}
\label{subsec:synthetic-setup}
We first use synthetic examples to illustrate the global and local geometric properties established in Section~\ref{sec:main-results}. These experiments are designed to visualize the objective over carefully chosen low-dimensional slices of the factor space. Throughout this subsection,
we set \(  n_1=n_2=4,~   n_3=3,\)
and consider tubal ranks \(r\in\{1,2\}\).

\vspace{0.1in}
\noindent{\bf Construction of the ground truth.}
We construct a balanced factorization
\(\mathcal X_\star = \mathcal U_\star\ast\mathcal V_\star^{*},
    ~
    \mathcal U_\star^{*}\ast\mathcal U_\star= \mathcal V_\star^{*}\ast\mathcal V_\star,\)
in the Fourier domain. At each frequency \(k\), we generate orthonormal matrices \(P_k\) and \(Q_k\) and a nonnegative diagonal matrix \(\Sigma_k\), and set
\[\widehat{\mathcal U}_\star^{(k)} = P_k\Sigma_k^{1/2},
    \qquad
    \widehat{\mathcal V}_\star^{(k)} = Q_k\Sigma_k^{1/2}.
\]
The factors are chosen to satisfy the conjugate-symmetry conditions required for their inverse Fourier transforms to be real. In particular, the factors at paired frequencies are complex conjugates, while those at self-conjugate frequencies are real. 
For the uniform-profile examples, every Fourier slice has rank \(r\). For the nonuniform-profile example, the final diagonal entry of \(\Sigma_{k_0}\) is set to zero at the self-conjugate frequency \(k_0=1\).  Consequently, \(  \rho_{k_0}=r-1,     \max_k\rho_k=r.\)
Thus the tensor still has tubal rank \(r\), although its width-\(r\)
factorization is overparameterized at frequency \(k_0\).

\vspace{0.1in}
\noindent{\bf Objective.}
To isolate the intrinsic factorization geometry from finite-sample effects, we take the sensing operator to be the identity. The objective  is therefore
\[
    f(\mathcal U,\mathcal V) =\frac12 \left\|   \mathcal U\ast\mathcal V^{*}-\mathcal X_\star \right\|_F^2 + \frac18\left\|    \mathcal U^{*}\ast\mathcal U   -    \mathcal V^{*}\ast\mathcal V\right\|_F^2.
\]
The identity operator satisfies the \((2r,0)\)-t-RIP, so this objective is a special case of the setting analyzed in Theorems~\ref{thm:global-strict-saddle} and~\ref{thm:rank-profile-dichotomy}.

\vspace{0.1in}
\noindent{\bf Tangent and normal directions.}
Let
\(
    \overline{\mathcal W} :=
    \begin{bmatrix}
        \mathcal U_\star\\
        \mathcal V_\star
    \end{bmatrix} \in\mathcal S_\star.
\) 
The tangent space of the solution orbit at
\(\overline{\mathcal W}\) is
\(
    T_{\overline{\mathcal W}}\mathcal S_\star =  \left\{
        \begin{bmatrix}
            \mathcal U_\star\ast\mathcal K\\
            \mathcal V_\star\ast\mathcal K
        \end{bmatrix} : \mathcal K^{*}=-\mathcal K\right\}.
\)
We construct this space explicitly in the Fourier domain. At a
self-conjugate frequency, the corresponding Fourier slice of
\(\mathcal K\) is real skew-symmetric. At a paired frequency, it is
skew-Hermitian, and the slice at the conjugate frequency is chosen as its
complex conjugate. The resulting real tangent directions are orthonormalized
by an SVD.

Given an initial direction \(\mathcal D_0\), we form a unit normal direction
by
\[
    \mathcal D
    =
    \frac{
        \mathcal D_0
        -
        P_{T_{\overline{\mathcal W}}\mathcal S_\star}\mathcal D_0
    }{
        \left\|
        \mathcal D_0
        -
        P_{T_{\overline{\mathcal W}}\mathcal S_\star}\mathcal D_0
        \right\|_F
    }.
\]
Hence every direction described below as normal is numerically orthogonal to
the full tangent space of the t-orthogonal solution orbit. 

\subsection{Visualization of the benign global landscape}
\label{subsec:synthetic-global}

We first consider a uniform rank-one ground truth. Define
\(
    s:=\|\overline{\mathcal W}\|_F
    ~\text{and}~
    \mathcal R:=\frac{\overline{\mathcal W}}{s}.
\)
The direction \(\mathcal R\) is radial and passes through both the selected
ground-truth factor and the rank-deficient origin. We generate a random
factor-space direction, remove its component along \(\mathcal R\), project
the result onto
\(N_{\overline{\mathcal W}}\mathcal S_\star\), and normalize it. Denote the
resulting transverse normal direction by \(\mathcal D_\perp\). We evaluate the objective on the two-dimensional slice
\[
    \mathcal W_{\mathrm{glob}}(\xi,\eta)
    =
    \xi\overline{\mathcal W}
    +
    \eta s\mathcal D_\perp,
    \qquad
    (\xi,\eta)\in[-1.6,1.6]\times [-1.6,1.6].
\]
Here \(\xi\) is the radial coordinate and \(\eta\) is a transverse normal
coordinate. The objective is evaluated on a \(151\times151\) grid.

Figure~\ref{fig:benign-landscape} shows the resulting surface and contour
plots. The points
\(
    (\xi,\eta)=(1,0)
    ~\text{and}~
    (\xi,\eta)=(-1,0)
\)
correspond to
\(\overline{\mathcal W}\) and \(-\overline{\mathcal W}\), respectively,
and both belong to the global-minimizer orbit \(\mathcal S_\star\). These
are two representatives of the orbit.  
The origin is also a critical point, since both factor tensors vanish there,
but it is not a global minimizer. The displayed slice places the origin on a
ridge between the two marked global-minimizer representatives. This is
consistent with Theorem~\ref{thm:global-strict-saddle}, which identifies every nonglobal critical point
in the full factor space as a strict saddle. The figure is intended only as
a low-dimensional illustration of this geometry and does not enumerate all
critical points of the full objective.

\begin{figure}[ht]
    \centering
\includegraphics[width=0.4\textwidth]{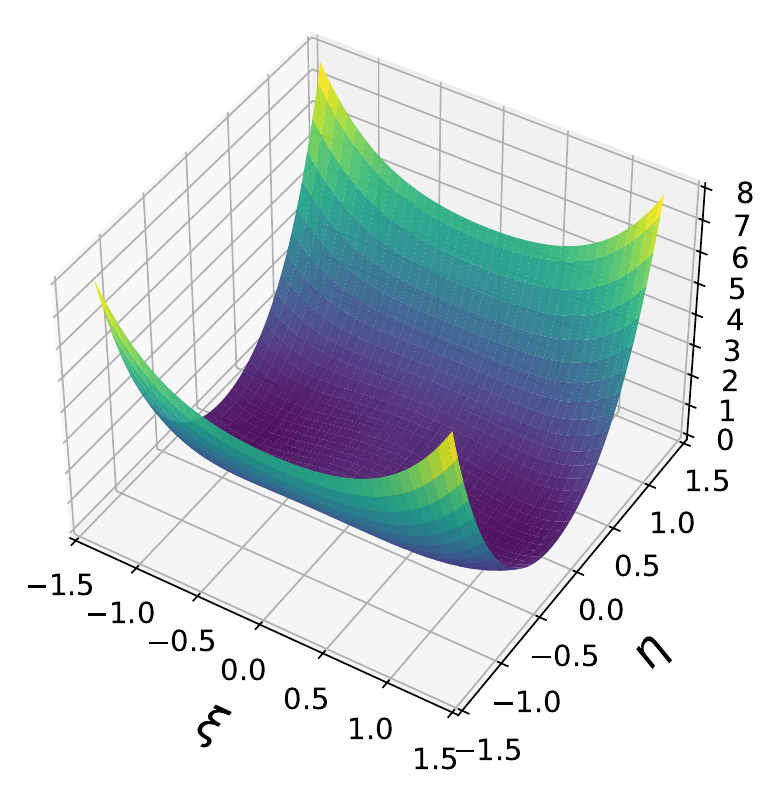}
\includegraphics[width=0.45\textwidth]{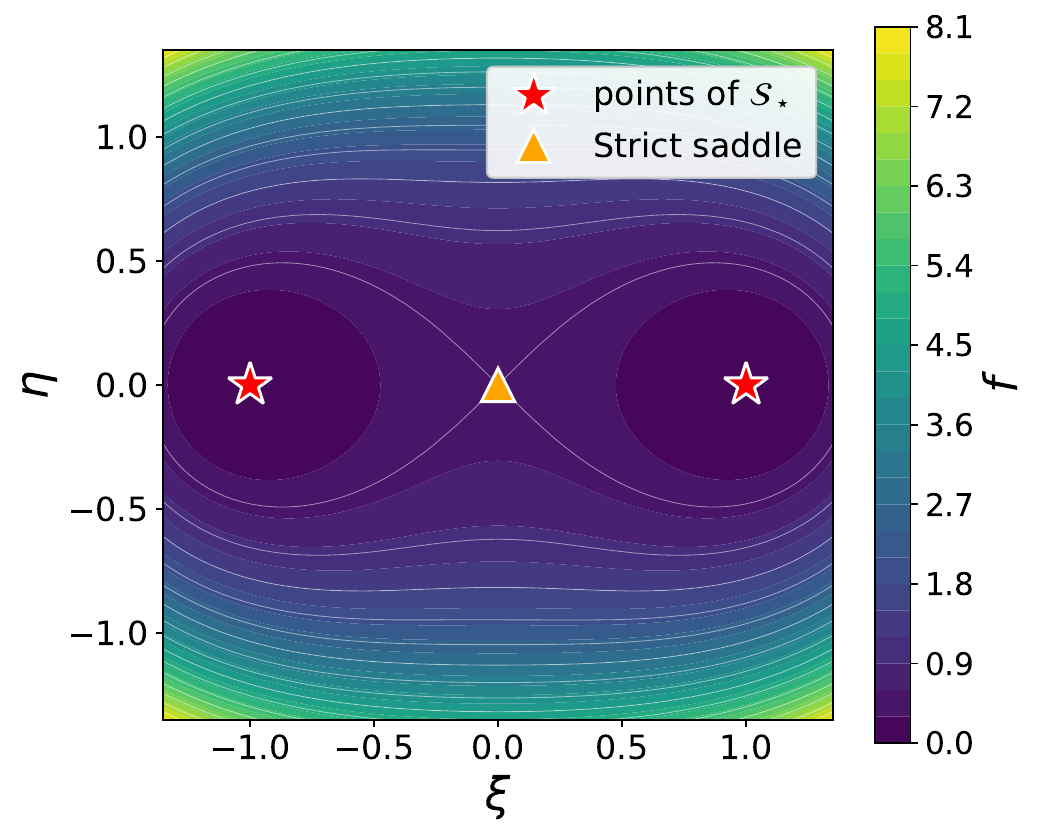}
    \caption{
    Benign global geometry of the balanced objective for a uniform
    rank-one ground truth with
    \(n_1=n_2=4\), \(n_3=3\), and \(r=1\).
    The objective is evaluated on the slice
    \(\mathcal W_{\mathrm{glob}}(\xi,\eta)= \xi\overline{\mathcal W}+ \eta\|\overline{\mathcal W}\|_F\mathcal D_\perp,\)
    where \(\xi\) is the radial coordinate and \(\eta\) is a transverse  normal coordinate.
    \textnormal{Left:} Surface plot. \textnormal{Right:} Contour plot.
    The stars at \((1,0)\) and \((-1,0)\) mark
    \(\overline{\mathcal W}\) and \(-\overline{\mathcal W}\), two points
    of the positive-dimensional solution orbit \(\mathcal S_\star\).
    The triangle at the origin marks the rank-deficient strict saddle.
    }
    \label{fig:benign-landscape}
\end{figure}

\subsection{Multi-rank-dependent local geometry}
\label{subsec:synthetic-multirank}

We next illustrate the geometric dichotomy in Theorem~\ref{thm:rank-profile-dichotomy}. We compare two
ground truths with the same tubal rank \(r=2\). The first has the uniform multi-rank profile, i.e.,  \(\rho_k=2\) for every \(k\), 
whereas the second contains one deficient Fourier slice, \(\rho_{1}=1.\)

For each ground truth, let \(\mathcal T\) be a unit vector in \(T_{\overline{\mathcal W}}\mathcal S_\star\), and let \(\mathcal D\) be a unit vector in \(N_{\overline{\mathcal W}}\mathcal S_\star\). We evaluate the objective on the affine tangent--normal slice
\[
    \mathcal W_{\mathrm{loc}}(a,b)
    =
    \overline{\mathcal W}
    +
    \|\overline{\mathcal W}\|_F
    \bigl(a\mathcal T+b\mathcal D\bigr),
    \qquad
    (a,b)\in[-0.5,0.5]\times [-0.5,0.5] .
\]
The contour plots use a \(121\times121\) grid. The line corresponding to \(b=0\) is the affine tangent line at \(\overline{\mathcal W}\); it is not the exact curved orbit
\(\overline{\mathcal W}\ast\mathcal Q(a)\).

For the uniform-profile example, we begin with a random perturbation supported at the self-conjugate frequency \(k_0=1\) and project it onto the normal space. Since every Fourier slice has full column rank, Theorem~\ref{thm:rank-profile-dichotomy}(i) shows that the Hessian kernel at
\(\overline{\mathcal W}\) consists exactly of the tangent directions. Consequently, the Hessian is strictly positive along every nonzero normal direction, and
\[
    f(\overline{\mathcal W}+t\mathcal D)=c_2t^2+O(t^3),  c_2>0.
\]
Thus the objective has quadratic growth transverse to the solution orbit.

For the deficient-profile example, we use the inactive-coordinate direction appearing in the proof of Theorem~\ref{thm:rank-profile-dichotomy}(ii). In the Fourier domain, the direction is supported only at \(k_0=1\) and has the form
\(
    \widehat{\mathcal D}_{U}^{(k_0)}= a e_r^{\mathsf T}, 
    \widehat{\mathcal D}_{V}^{(k_0)}=0,
\)
where \( a\perp \operatorname{range}  \bigl(\widehat{\mathcal U}_\star^{(k_0)}\bigr).
\)
Because \(k_0\) is self-conjugate, \(a\) is chosen to be real. All remaining Fourier slices of the direction vanish. This construction gives
\(
    \mathcal D_U\ast\mathcal V_\star^{*}=0,
    \mathcal U_\star^{*}\ast\mathcal D_U=0.
\)
The direction is normal to the solution orbit in exact arithmetic. 

Along the curve
\(  \mathcal W(t)  =
    \bigl(
        \mathcal U_\star+t\mathcal D_U,
        \mathcal V_\star
    \bigr),\)
the reconstruction residual vanishes identically:
\(
    (\mathcal U_\star+t\mathcal D_U)
    \ast\mathcal V_\star^{*}
    -
    \mathcal X_\star
    =0.
\)
The balancing residual is
\(  \mathcal B(\mathcal W(t))  =   t^2\mathcal D_U^{*}\ast\mathcal D_U,
\)
and therefore
\(  f(\mathcal W(t))  =
    \frac{t^4}{8}
    \left\|
        \mathcal D_U^{*}\ast\mathcal D_U
    \right\|_F^2.
\)
Hence the quartic growth along this direction is exact. Moreover,
\( \|\nabla f(\mathcal W(t))\|_F =  \Theta(|t|^3),
\)
which agrees with the cubic gradient growth established in
Theorem~\ref{thm:rank-profile-dichotomy}(ii).

Figure~\ref{fig:multirank-geometry}(a)--(b) compares the corresponding
tangent--normal slices. The uniform-profile objective rises sharply in the
normal direction, whereas the deficient-profile objective exhibits a
visibly flatter valley. The tangent coordinate is included to display the
orbit-induced degeneracy common to both cases; the distinction between the
two profiles occurs in the normal direction.

To quantify the transverse growth, we evaluate
\(  f(\overline{\mathcal W}+t\mathcal D)
\)
at \(30\) logarithmically spaced values satisfying
\(
    10^{-2.5}\leq t\leq10^{-1.2}.
\)
A least-squares line is then fitted to
\(\log f(\overline{\mathcal W}+t\mathcal D)\) as a function of
\(\log t\). As shown in
Figure~\ref{fig:multirank-geometry}(c), the fitted exponent is close to
\(2\) for the uniform multi-rank and close to \(4\) for the deficient
multi-rank. The restricted fitting interval emphasizes the local regime
and reduces the influence of higher-order terms on the uniform-profile
fit.

\begin{figure}[ht]
    \centering
  \includegraphics[width=0.32\textwidth]{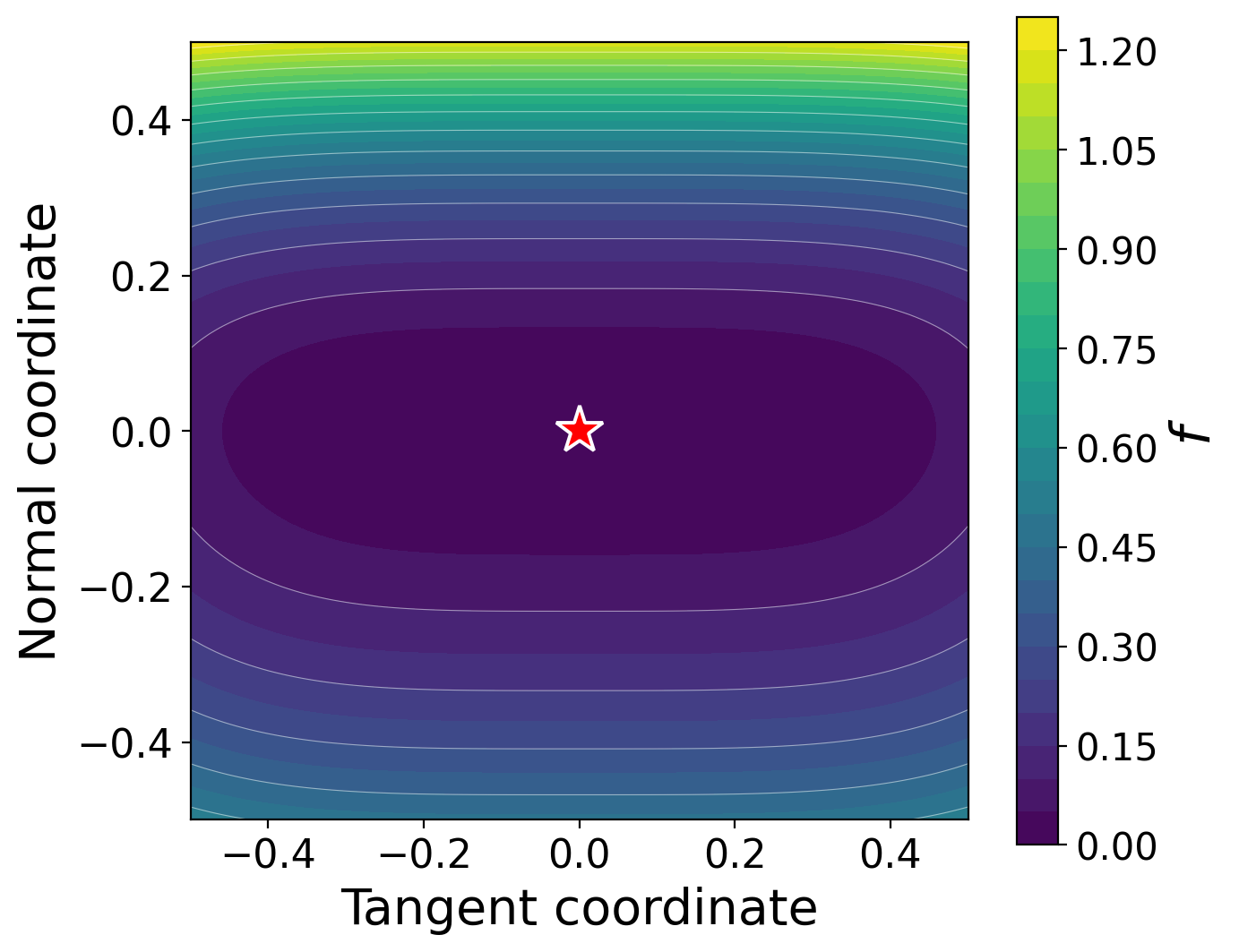}
    \includegraphics[width=0.32\textwidth]{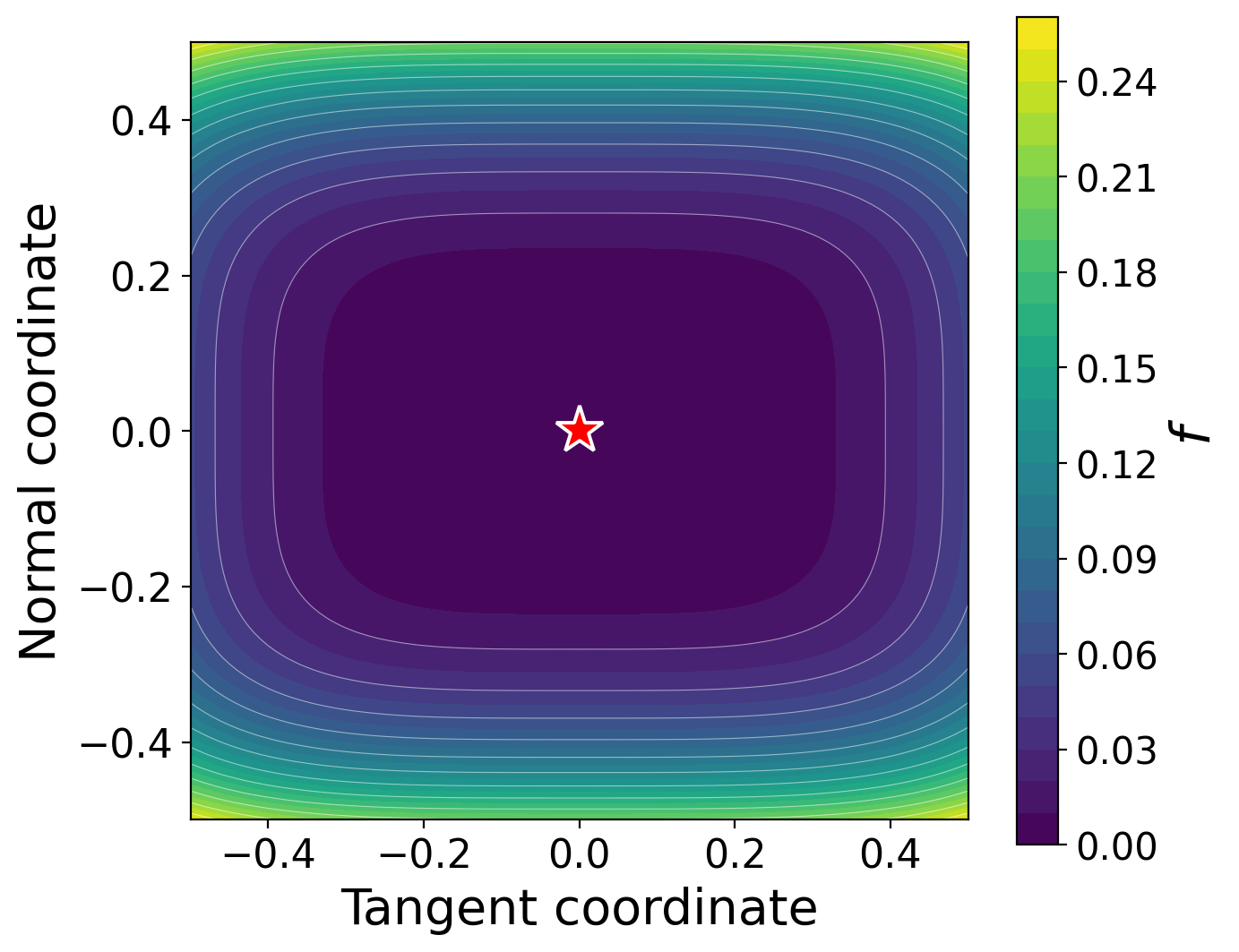}
    \includegraphics[width=0.31\textwidth]{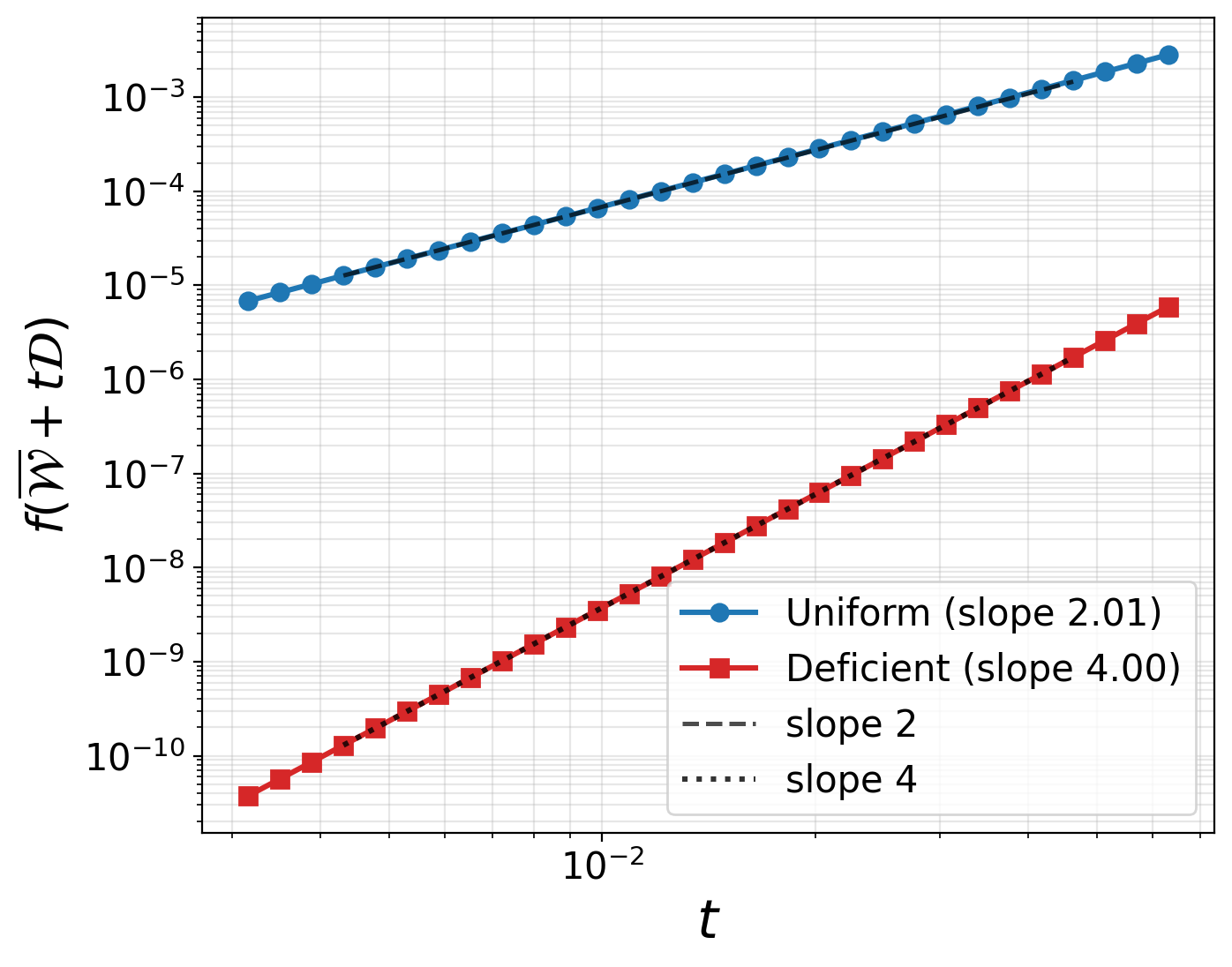}
    \caption{
    Multi-rank-dependent local geometry at a selected global minimizer
    \(\overline{\mathcal W}\), for two ground truths with equal tubal rank
    \(r=2\) and dimensions \(n_1=n_2=4\), \(n_3=3\).
    \textnormal{(a)} Uniform multi-rank,
    \(\rho_k=r\) for every \(k\).
    \textnormal{(b)} Nonuniform multi-rank with one deficient slice,
    \(\rho_{k_0}<r\).
    The horizontal and vertical axes in panels \textbf{Left} and \textbf{Middle}  are the tangent and normal  coordinates, respectively, and the star marks
    \(\overline{\mathcal W}\).
    \textbf{Right:} Log--log plot of
    \(f(\overline{\mathcal W}+t\mathcal D)\) along the selected normal
    direction. The uniform and deficient cases follow slopes close to
    \(2\) and \(4\), respectively, illustrating the quadratic and quartic
    growth laws in Theorem~\ref{thm:rank-profile-dichotomy}.
    }
    \label{fig:multirank-geometry}
\end{figure}

These synthetic experiments illustrate two complementary aspects of the
theory. The global slice in Figure~\ref{fig:benign-landscape} displays the
benign strict-saddle structure established in Theorem~\ref{thm:global-strict-saddle}, while Figure~\ref{fig:multirank-geometry} isolates the local distinction identified in Theorem~\ref{thm:rank-profile-dichotomy}. In particular, tensors with the same tubal rank can exhibit different transverse growth orders because the fine geometry is governed by the complete multi-rank profile rather than by the tubal rank alone.

\subsection[Real-OCT tensor-sensing experiment]{Real-OCT tensor-sensing experiment}
\label{sec:oct-experiment}

Motivated by the benign-landscape guarantee in
Theorem~\ref{thm:global-strict-saddle}, we examine the recovery behavior
of the balanced factorized formulation from random initializations using a
real retinal optical coherence tomography (OCT) volume and simulated linear
measurements. We compare an exact-low-tubal-rank target with its untruncated
counterpart to investigate how recovery changes when the exact-rank modeling
assumption is relaxed.  We use
the publicly downloadable UK Biobank OCT example (Resource~337), which
contains 128 grayscale B-scans of size $650\times512$.\footnote{Available at
\url{https://biobank.ndph.ox.ac.uk/ukb/refer.cgi?id=337}.}  We estimate the
axial retinal support from the volume-averaged row-intensity profile, retain
rows 171--532, and remove 3\% from each lateral boundary.  The resulting stack
is reordered as lateral $\times$ slow-scan $\times$ axial and trilinearly
resampled to
\(
    n_1\times n_2\times n_3=28\times8\times28,
    ~ N=n_1n_2n_3=6272.
\)
Intensities are linearly rescaled using the empirical 0.5th and 99.5th
percentiles and clipped to $[0,1]$.  We denote the resulting untruncated
preprocessed tensor by $\mathcal{X}_{\mathrm{raw}}$.

We consider two targets.  For the exact-rank experiment, we let
$\mathcal{X}_\star$ be the rank-$r$ truncated t-SVD of
$\mathcal{X}_{\mathrm{raw}}$, with $r=3$.  All 28 Fourier slices of
$\mathcal{X}_\star$ have numerical rank three under the relative tolerance
$10^{-10}$, so $\operatorname{rank}_{\mathrm t}(\mathcal{X}_\star)=3$ and
the factor width equals the true tubal rank.  For the model-mismatch
experiment, the target is $\mathcal{X}_{\mathrm{raw}}$, while the factor width
remains 3.  Its best rank-3 t-SVD approximation error is
\(
    \frac{\|\mathcal{X}_{\mathrm{raw}}-\mathcal{X}_\star\|_F}
         {\|\mathcal{X}_{\mathrm{raw}}\|_F}=0.0513.
\)
The first target satisfies the exact-rank modeling assumption,
whereas the second introduces low-rank model mismatch at the
same factor width.
We denote the target by $\mathcal X_{\mathrm{tar}}$, which equals
$\mathcal X_\star$ in the exact-rank experiment and
$\mathcal X_{\mathrm{raw}}$ in the model-mismatch experiment.

For each measurement count $m$, we generate a normalized Gaussian sensing
operator
\(
    [\mathcal{M}(\mathcal{Z})]_\ell
      =\langle\mathcal{A}_\ell,\mathcal{Z}\rangle_F,
    \text{~with~}
    [\mathcal{A}_\ell]_{ijk}\overset{\mathrm{i.i.d.}}{\sim}
      \mathcal{N}(0,1/m),
\)
and form noiseless measurements $y=\mathcal{M}(\mathcal{X}_{\mathrm{tar}})$.
We use $m/N\in\{0.2,0.3,\ldots,0.8\}$ with 
$m\in\{1254,1882,2509,3136,3763,4390,5018\}$.\footnote{For these dimensions, no linear sensing map with
$m<N$ can satisfy $(6,\delta)$-t-RIP with $\delta<1/2$.
Both experiments therefore examine empirical recovery outside
the small-t-RIP regime covered by
Theorem~\ref{thm:global-strict-saddle}.}  At each measurement ratio, we use the same Gaussian sensing operator
for both targets.   
We minimize the balanced objective
\eqref{eq:balanced-sensing-objective} with $\mathcal X_\star$ replaced by $\mathcal X_{\mathrm{tar}}$. At each $m/N$, we use ten independent random initializations constructed by frequency-wise QR factorizations.  Conjugate symmetry is imposed explicitly,
and the two factors have identical Gram tensors at initialization; the largest
normalized initial balance residual over all trials is $2.1\times10^{-16}$.
The random initializations use scales of $0.5$, $1$, and $2$
times a reference factor norm derived from $\mathcal M^\ast(y)$.
We optimize with MATLAB's quasi-Newton implementation of
\texttt{fminunc}, using the analytic gradient, at most 150 iterations, and
optimality tolerance $10^{-6}$.\footnote{The theorem concerns the geometry of the
objective and does not require this particular solver or initialization.}
\begin{figure}[t]
    \centering
    \includegraphics[width=0.8\textwidth]{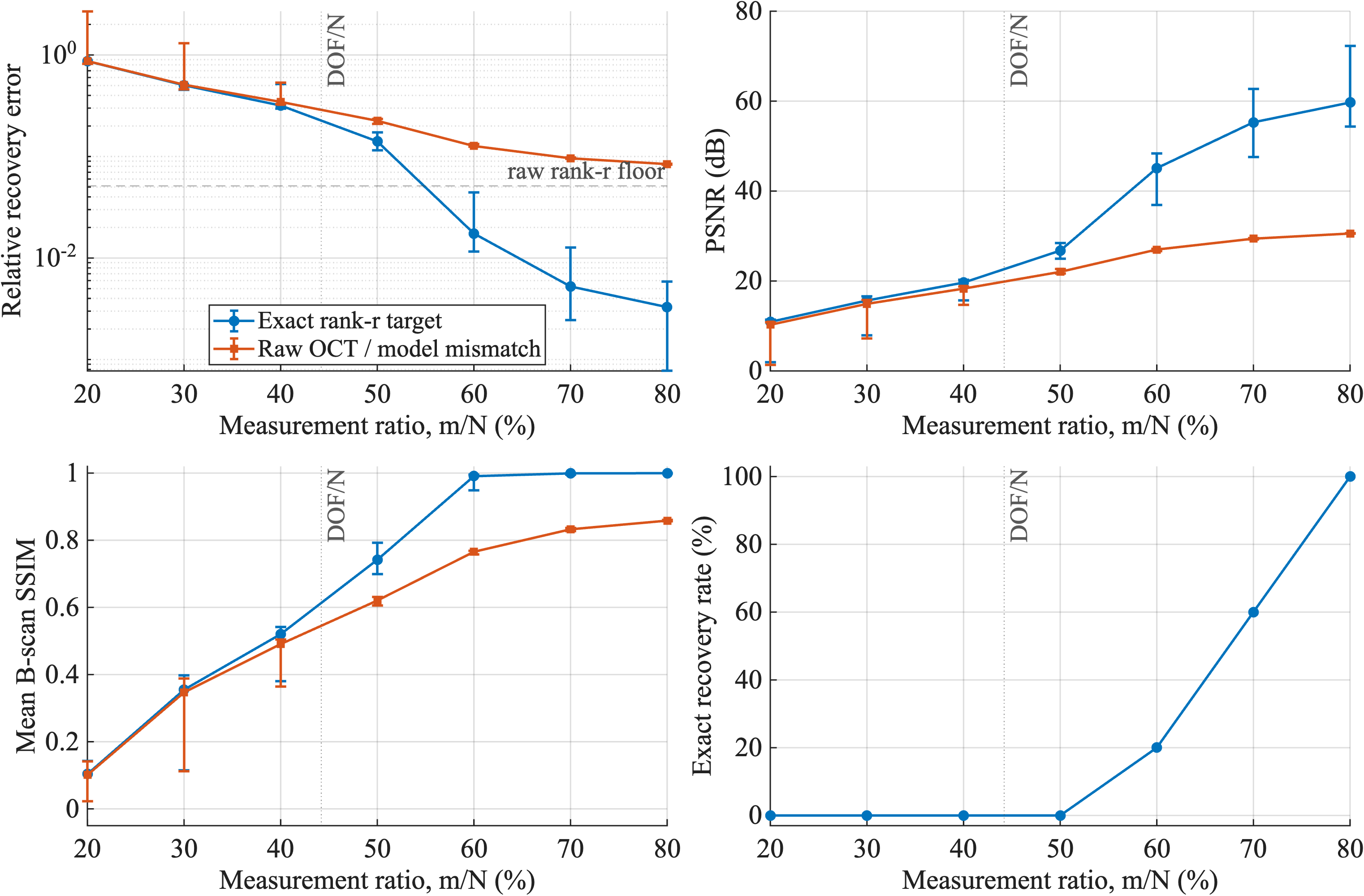}
    \caption{The first three panels show
    medians over ten balanced random starts, with error bars spanning the
    25th to 75th percentiles. The dashed horizontal line is the rank-3
    approximation floor of the untruncated target. The vertical line (DOF/N) at
    $r(n_1+n_2-r)n_3/N=0.4420$ is only a parameter-count reference, not a
    theoretical phase-transition threshold. The lower-right panel shows the percentage of successful starts for the exact-rank target.}
    \label{fig:oct-summary}
\end{figure}

\begin{figure}[ht]
    \centering
    \includegraphics[width=\textwidth]{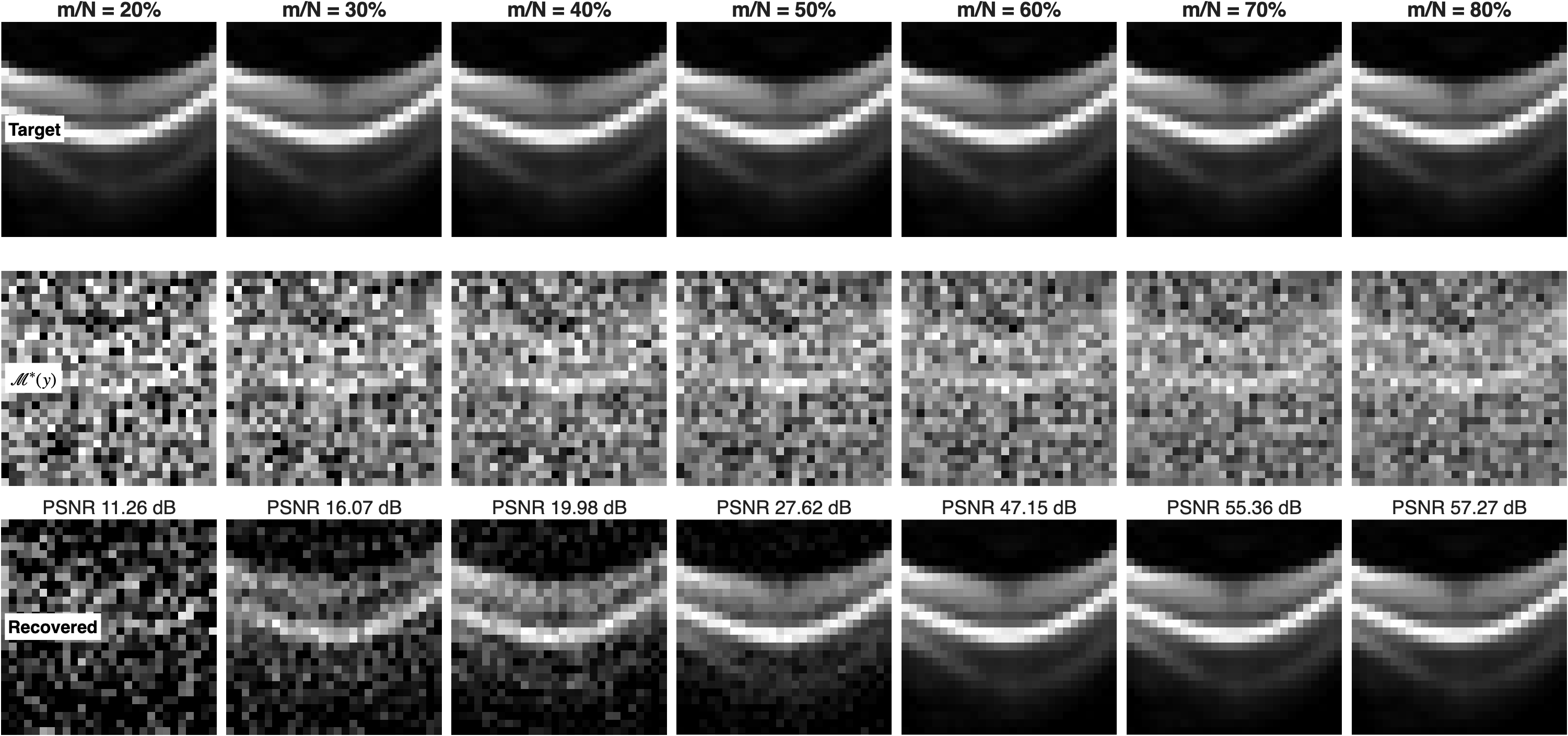}
    \caption{Exact-rank OCT-derived target under normalized Gaussian sensing.
    Columns correspond to $m/N=20\%,\ldots,80\%$. Rows show the target, the
    adjoint backprojection $\mathcal{M}^{*}(y)$, and a representative
    reconstruction.  The number above each reconstruction is that
    individual trial's full-volume PSNR (in dB). For each measurement ratio,
    we select the trial whose full-volume relative recovery error is closest
    to the median over ten balanced random starts.}
    \label{fig:oct-exact}
\end{figure}

\begin{figure}[!h]
    \centering
    \includegraphics[width=\textwidth]{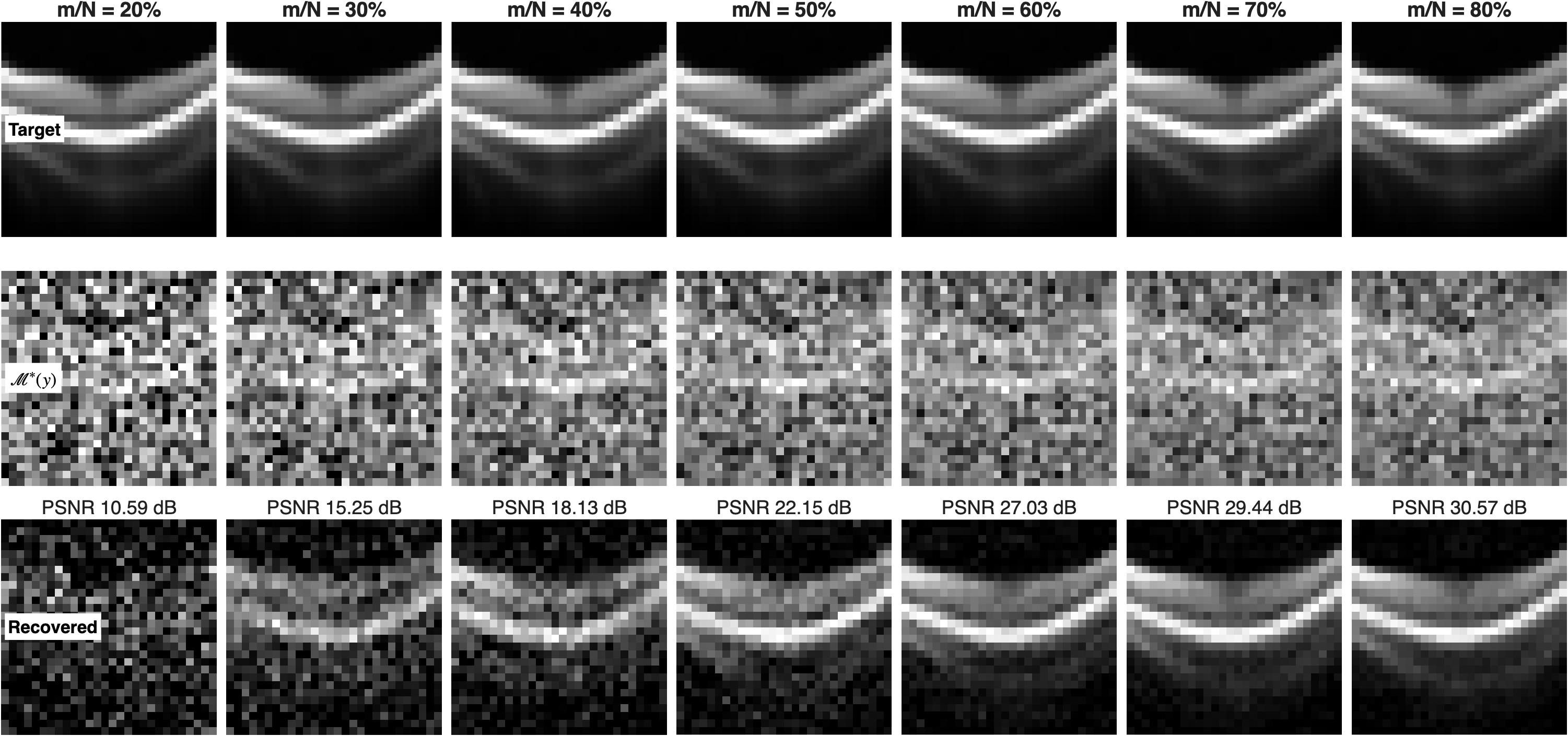}
    \caption{Rank-3 model mismatch for the untruncated preprocessed OCT
    tensor. The layout, representative-trial selection rule, and
    interpretation of the displayed PSNR values are the same as in
    Figure~\ref{fig:oct-exact}.
    Increasing $m/N$ restores progressively more retinal structure, while a
    nonzero discrepancy remains because the reconstruction has tubal rank
    at most 3.}
    \label{fig:oct-mismatch}
\end{figure}

Figure~\ref{fig:oct-summary} summarizes the relative recovery error
$\|\mathcal X_{\mathrm{rec}}-\mathcal{X}_{\mathrm{tar}}\|_F/\|\mathcal{X}_{\mathrm{tar}}\|_F$, PSNR, and SSIM averaged over the eight B-scans, using ten random starts at each measurement ratio. 
Here, $\mathcal X_{\mathrm{rec}}$ denotes the reconstructed tensor.
For the exact-rank experiment, a trial is counted as successful if the relative recovery error is at most $10^{-2}$; we do not report a success rate for the model-mismatch
experiment. The exact-rank target shows a sharp improvement: the median
relative recovery error decreases from $0.141$ at $m/N=0.5$ to $0.0174$,
$0.00525$, and $0.00328$ at $m/N=0.6$, $0.7$, and $0.8$, respectively.
The corresponding success rates are 0\%, 20\%, 60\%, and 100\%.
By contrast, the median error for the model-mismatch target decreases to
$0.0842$ at $m/N=0.8$, but remains above the rank-three approximation floor. These experiments use one OCT volume and a fixed Gaussian sensing
operator at each measurement ratio. The error bars in
Figure~\ref{fig:oct-summary} reflect variability across initializations.
Successful recovery from all ten random initializations at $m/N=0.8$
illustrates the empirical effectiveness of the balanced formulation
in this instance, beyond the small-t-RIP regime covered by
Theorem~\ref{thm:global-strict-saddle}.
Figures~\ref{fig:oct-exact} and~\ref{fig:oct-mismatch} complement
these aggregate statistics with representative individual
reconstructions for the exact-rank and model-mismatch targets,
respectively.

\section{Conclusion}
\label{sec:conclusion}

We studied the optimization landscape of balanced low-tubal-rank tensor sensing under the t-product. Under the t-RIP assumption, we proved that the objective has no spurious local minima and that every nonglobal critical point is a strict saddle, and established a quantitative strict-saddle characterization over the entire factor space. We further showed that the multi-rank determines the local geometry at the solution set. When every Fourier slice has rank equal to the tubal rank, the objective has quadratic growth transverse to the solution orbit. When some slice has smaller rank, the factorization remains overparameterized at that frequency even with the exact tubal rank, producing additional normal Hessian-kernel directions with quartic objective growth and cubic gradient growth. These directions explain the cubic-root proximity scale in the general landscape theorem.
Future work includes extending the landscape analysis to noisy measurements and approximate low-tubal-rank models.
It would also be interesting to investigate analogous geometric distinctions for other tensor settings.

\section*{Acknowledgment}
This work is in part supported by NSF grants ECCS-2409702, NSF Grant DMS-2603463, and OIA-2535317.

\section*{Declarations}
There is no conflict of interest. 

%
\bibliography{sn-bibliography}
\end{document}